\newtheorem{myTheo}{Theorem}
\theoremstyle{definition}
\newtheorem{myExm}{Example}
\newtheorem{myDef}{Definition}
\definecolor{mygray}{gray}{.2}
\definecolor{mypink}{rgb}{.99,.91,.95}
\definecolor{mycyan}{cmyk}{.3,0,0,0}
\begin{document}
% The file aaai.sty is the style file for AAAI Press 
% proceedings, working notes, and technical reports.
%
\title{A Logical Model for Supporting Social Commonsense Knowledge Acquisition} 
%\title{A Logical Model for Supporting Commonsense Knowledge Acquiring \\for Authors Using \LaTeX{}}
\author{Zhenzhen Gu \ and  Cungen Cao \ and Ya Wang \ and Yuefei Sui\\
	   \{\textit{guzhenzhen, cgcao, wangya, yfsui}\}@ict.ac.cn\\
	Institute of Computing Technology, Chinese Academy of Sciences, Beijing, China \\
}
% A and B and C
% Institute
% Email
%\author{\\
%Association for the Advancement of Artificial Intelligence\\
%2275 East Bayshore Road, Suite 160\\
%Palo Alto, California 94303\\
%}

\maketitle
\begin{abstract}  
To make machine exhibit human-like abilities in the domains like robotics and conversation, social commonsense knowledge (SCK), i.e., common sense about social contexts and social roles, is absolutely necessarily. Therefor, our ultimate goal is to acquire large-scale SCK to support much more intelligent applications. Before that, we need to know clearly what is SCK and how to represent it, since automatic information processing requires data and knowledge are organized in structured and semantically related ways. For this reason, in this paper, we identify and formalize three basic types of SCK based on first-order theory. Firstly, we identify and formalize the interrelationships, such as \textit{having-role} and \textit{having-social\_relation}, among social contexts, roles and players from the perspective of considering both contexts and roles as first-order citizens and not generating role instances. Secondly, we provide a four level structure to identify and formalize the intrinsic information, such as \textit{events} and \textit{desires}, of social contexts, roles and players, and illustrate the way of harvesting the intrinsic information of social contexts and roles from the exhibition of players in concrete contexts. And thirdly, enlightened by some observations of actual contexts, we further introduce and formalize the embedding of social contexts, and depict the way of excavating the intrinsic information of social contexts and roles from the embedded smaller and simpler contexts. The results of this paper lay the foundation not only for formalizing much more complex SCK but also for acquiring these three basic types of SCK.

\end{abstract} 
%%关注的是抽象的知识，但用实例来刻画或解释抽象的知识
%%We believe that we are pioneering an interesting new definition of goals and rational goal revision, as part of the larger project of producing a formal theory of practical rationality.

%%本文公式的居中对齐
%%场合内的物体可不可以也作为player

%%曹老师意见：
%%1：统一术语，universal和instances
%%2：means改成meaning
\section{Introduction}  
To make machine achieve human-level intelligence in the domains such as robotics, conversation, natural language understanding and so on, without replicating human cognition directly, basic commonsense knowledge of our society and people will be necessary, since there exist a lot of problems that are easy for people to understand and answer but can be difficult for artificial systems without our common sense (Davis and Marcus 2015, Davis 2017). Our society mainly consists of diverse social contexts, such as \textit{school} and \textit{hospital}. And people interpret their everyday even whole life by entering into different social contexts to play different social roles, such as \textit{daughter}, \textit{student} and \textit{doctor}. 
%%文献
Therefor, our ultimate goal is to acquire large-scale social commonsense knowledge (SCK), i.e., common sense about social contexts and social roles, to support much more intelligent applications. 
%as well as push the progress of the commonsense area. 
%It is a pity that currently, there does not exist any work on this aspect. 
%%工作本身是一个challange problem
%%社会常识比一般的常识要复杂
Before that, we need to know clearly what is SCK and how to represent it, i.e., what to acquire and how to formalize the acquired knowledge, 
%%也即要获取什么以及如何组织获取的信息
since automatic and intelligent information processing requires knowledge and data to be organized in structured and semantically related ways.  
%%这句话的修改
For this reason, in this paper, we identify and formalize three basic types of SCK to lay the foundation for formalizing more complex SCK and acquiring these three basic types of SCK.
%, i.e., the interrelationships among social contexts, roles and players, the intrinsic content of social contexts, roles and players, and the embedding of social contexts, based on first order logic.   
%%将作为描述复杂常识的基础
%%突出重点
%(also known as predict calculus).  
%%=====以及如何从具体的表现中获取contexts和roles的信息 
%%to高效和快速地实现这个ultimate goal 
%%作为第一步，认识什么是社会常识和社会常识的表示
%%the well accpeted common sense about social roles, contexts and players are contexts having social roles and players playing social roles in social contexts.
%%目前工作中存在的一些问题。 

%%=====这一块工作的修改
Roles are very important both theoretically and practically in modeling the real world phenomena around us. And many theories of roles have been proposed for different application areas and different practical problems (Steimann 2000; Masolo et al. 2004; Loebe 2007; Mizoguchi et al. 2015; Bera, Burton-Jones and Wand 2017). For example, Masolo et al. (2004) and Boella and van der Torre (2006) discuss and formalize the distinguish features of social roles and organizations, and Boella and van der Torre (2007) study the ontological properties of social roles in multi-agent systems. For these works, roles are modeled either as types (classes, universals or unary predicts), or relations (multivariate predicts), or objects (instances). And for the works modeling roles as types (classes, universals and unary predicts), role instances are modeled either as the players or as adjunct entities that existentially depend on their players but are disjoint from them. 
%whose instances are the players; and the one proposed initially by (Wieringa, de Jonge &Spruit 1995) in which roles are rigid types whose instances are adjunct entities (Steimann 2000) that existentially depend on their players but are disjoint from them.  
%%有些定义roles based on or not based on contexts, based on players or not based on players 
And up to now, no unified and common agreed formalization and representation of roles exist (Mizoguchi et al. 2015, Burton-Jones and Wand 2017, Genovese 2007, Loebe 2007, Boella, van der Torre and Verhagen 2006), since different authors usually emphasize different aspects of roles and it is difficult to piece together these models on roles.  
%%Until now no consensus has been reached as to how roles should be represented or integrated into the estabilised modeling frameworks.  
%A major challenge when piecing together the literature on roles is that different authors emphasize different aspects of roles. 
%this is due to the richness of the concept of roles and the fact that each existing proposal focus on some aspects of roles while neglecting others.
%%The likely reasons of these divergences are that many papers on the nothion of role fail to have an interdisciplinary character, that much work proposes new definitions of roles to deal with particular practical problems, and that role seems an intuitive notion which can be grasped in its prototypical characters, but it is instead a deceptive one when details must be clarified.
%%对social roles的建模，表示成实例，类(types, universlas), relations (多元谓词)，以及objects (实例)(描述roles的元性质)，
%%将social roles作为roles types还是声明为natures types的子类。roles是否可以在contexts中单独存在
%%目前已经存在一些对roles的研究，模型的表示中存在的一些争议

%For the above  mentioned problems, 
%%social contexts和social roles也是某种常识的表现，所以要先建立它们之间的内在联系
Based on the current works on roles, we first identify and formalize the interrelationships among social contexts, roles and players from the following two distinctive perspectives. Firstly, we model both social contexts and roles, no matter abstract or concrete, as first-order citizens, i.e., instances. By this way, we can not only successfully avoid the controversy about the dimensions of social roles but also describe the meta-knowledge of social contexts and roles without exceeding the first-order logic category. 
%%===first-order citizens的提法是否合适
%%===controversy表示将roles应该表示成类还是属性
And secondly, we do not generate social role instances, with the motivation of simplifying depiction and reducing the number of instances significantly. Normally, a player does not play the same role multiple times in the same context and same period. 
%%===用a same还是the same
Therefor, a player attached with a social role can exactly denote a role instance in a context. For example, the description ``Bob is a student in university u during time interval t" can be captured as $ \textit{play}(\textit{Bob}, \textit{student}, u, t) $ by our theory. Comparing with the representation of first generating a role instance $ r $ of the role \textit{student} then declaring \textit{Bob} playing \textit{r} and \textit{u} having the role \textit{r}, our representation is more succinct and intuitive. Moreover, besides the widely accepted \textit{having-role} and \textit{playing} interrelations, other interrelations such as having and playing social relations, are also identified and formalized.   
%Moreover, besides \textit{having-role} and \textit{playing}, other interrelationships, such as \textit{having-social\_relation} and \textit{containment}, are also identified and formalized.
%Moreover, in the real world, we usually hear about the description such as ``my mother...", ``my teacher..." and ``his leader...". This means there exist dependence, relativity and communication among roles. Thus, we further introduce a predict $ \textit{coPlay} $ to capture such common sense of social roles. 
%%introduce the predicts hasCoRole and coPlay
%Therefor, the description ``In university $ u $, Lucy is a teacher of the student Bob during \textit{t}." can be easily captured by the axiom $\textit{coPlay}(\textit{Lucy}, \textit{Bob}, \textit{teacher}, \textit{student}, u, t)$.
%%社会关系
%%从常识表示的角度来建模它们之间的内在联系

For the context \textit{school}, besides the social roles, such as \textit{teacher} and \textit{student}, linked into it, we can associate a lot of things such as \textit{enrolling} and \textit{teaching}. And for the context \textit{hospital},  the things like \textit{treating} and  \textit{operation} can be associated. This means social contexts have their intrinsic information, and so for social roles and players. For this reason, we further propose a four level structure to identify and formalize the intrinsic information, such as \textit{events} and \textit{desires}, of social contexts, roles and players from different angles and with different granularity. Besides, the way of harvesting the intrinsic information of social contexts, roles and social relations from the performance of the corresponding players in concrete contexts is also illustrated. 
%The inherent information is described not only from abstract to concrete, such as from the inherent knowledge of abstract contexts to the inherent knowledge of concrete contexts, but also from coarse-grained to fine-grained, such as from social contexts to the social roles in social contexts.
%%further的近义词。
%%对natural types的刻画

It is undeniable that the social contexts, such as schools, hospitals and companies, we participated in  are very complex, since, they usually contains a lot of social roles, each role may have a lot of players, and different social roles usually behave totally differently. 
%%场合复杂性的体现，角色多，角色的扮演者多(相同时间/不同时间)，涉及到的日常事务多，不同类型的角色又表现出很大的差异性
Such complexity makes acquiring SCK, a hard challenge problem.  By deeply analyzing and observing some actual social contexts, we found that like building block, a social context, such as \textit{school}, can be spliced up by some much smaller and simpler contexts, such as \textit{classroom context}. 
%%小场合主要体现在role少，范围小, 生命周期短
%For example, the contexts splicing up a school context are contexts like classroom contexts, examination contexts, office contexts and so on. 
And a player in a social context by entering into some smaller and simpler contexts, such as \textit{classroom context} to interpret the meaning and mission of the role, such as \textit{teacher}, it played. 
%Take the role \textit{teacher} as an example. A teacher in a school by entering the contexts like \textit{classroom contexts} and \textit{office contexts} to depict this role. 
%%=======
%These observations inspire us to further introduce the concept of social context embedding with the motivation of acquring the common sense of social contexts and roles by analyzing the embedded much smaller and simpler contexts.
This enlightens us to introduce the concept of social contexts embedding to formalize such phenomenon of social contexts and roles. 
%%以及怎么通过嵌入的子场合来获取原场合的常识的方法  
%%这些场合是事件导致的场合 

Social context embedding can simplify the task of CSK acquisition from the following three aspects. Firstly, even a social context has a very long life-cycle (we can not observe it from its start to its finish), the number of smaller contexts embedded into it is usually limited. 
%%可以用嵌入的子场合来逼近原场合
Secondly, even ``uncorrelated" social contexts may have the same types of embedded contexts, such as \textit{conference} and \textit{conversation}. This means, after acquired the common sense of the embedded contexts in one context, for other contexts, the same typed embedded contexts can be ignored. Finally, from the role perspective, a social role usually enters into some not all the embedded contexts of a social context. For example, a \textit{auditor} usually just enters into the classroom contexts. By just considering the related embedded contexts, the task of acquiring common sense of social roles can be simplified.

\paragraph{Contributions.} The main contribution of this paper can be summarized as follows:  
\begin{enumerate}
	\item The interrelationships, such as \textit{having-role} and \textit{having-social\_relations}, among social contexts, roles and players are identified and formalized from the perspective of considering contexts and roles as first order citizens and not generating role instances.
	\item A four level structure is provided to identify and formalize the intrinsic information, such as \textit{events} and \textit{desires}, of social contexts, roles and players, and the way of harvesting the intrinsic information of social contexts, roles and social relations from the exhibition of players in instance contexts is also illustrated;  
	\item The embedding of social contexts is introduced and formalized and the way of excavating the intrinsic information of social contexts, roles and social relations from the embedded smaller and simpler contexts is depicted. 
\end{enumerate}   
%%part 5: 用FOL的原因，FOL的语法和语义比较好理解而且FOL被大多人所熟知。  

%In the last of this section, we mention that choosing first-order logic (FOL) which is also known as predict calculus does not mean that FOL is the best suited logic to represent common sense of social contexts and roles. The main reason is that FOL has relative simple syntax and it is familiarized by most people.
%%这一段的内容可省略 

%\leftrightarrow\leftrightarrowK1:NaturalType+SocialRoleType+++K2:SocialRole的对抗性 (区分学生角色&与家庭中的角色)
\section{Interrelationships among Social Contexts, Social Roles and Players}     
 
%%%====文字介绍的细化
As illustrated in Fig.\ref{Fig:Relation}, the widely accepted interrelationships among social contexts, roles and players are social contexts having social roles and players playing social roles in social contexts. For example, a hospital has social roles such as \textit{doctor} and \textit{patient}, and each \textit{doctor} and each \textit{patient} are played by human beings. In this section, we formalize the interrelationships among social contexts, roles and players, containing not only \textit{having} and \textit{playing} social roles but also \textit{having} and \textit{playing} social relations generated by social roles and the \textit{containment} relation among social contexts (roles) as well as the relation of roles and social relations in abstract contexts and in instance contexts. 
%containing not only \textit{having-role} and \textit{playing} but also \textit{having-social\_relation}, \textit{containment}.
%%%以及抽象和具体的之间的关系
%Unlike the existing works on social contexts and roles, in our model both social contexts and roles are treated as first-order citizens, i.e., instances, and no role individuals are generated. The reasons and advantages have been explained in the second paragraph of the Introduction.  
The primitive predicts used and their meanings as well as the domain restrictions of the arguments of relational predicts are shown in Fig. \ref{Fig:Predict1}. 

\begin{figure}[!htbp] 
\centering
\small
\includegraphics[scale=0.45]{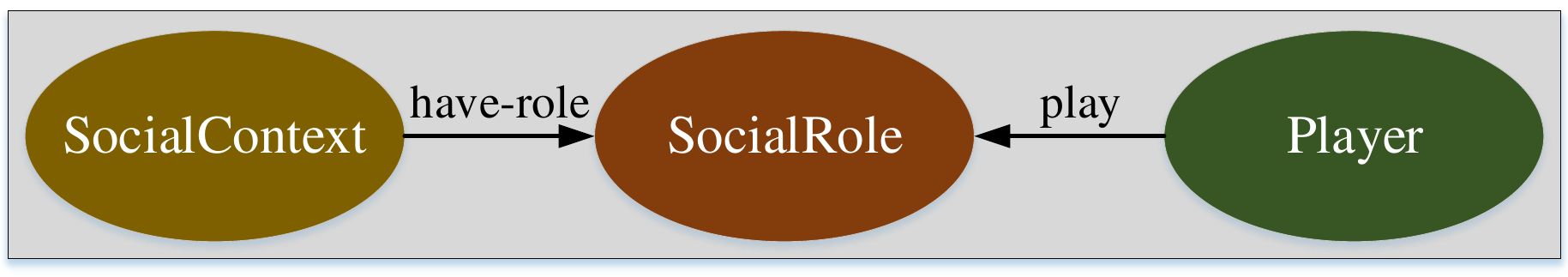}
\caption{The playing and having-role relationships among social contexts, roles and players.}
\label{Fig:Relation}
\end{figure}

\begin{figure}[!htbp]
\centering
\small
\subfigure[]{
\begin{tabular}{|l|}  
\hline
\!$ \bullet\  \textit{SC}(C) $, $ \textit{SR}(r) $:\ \ {\color{mygray}\textit{$ C $ is an abstract context, $ r $ is a social role }};\\
\!$ \bullet\   \textit{IC}(c) $, $ \textit{PL}(p) $:\ \ {\color{mygray}\textit{$ c $ is an instance context, \textit{p} is a player}};\\
\!$ \bullet\   \textit{TI}(t) $:\ \ {\color{mygray}\textit{$ t$ is a time interval}};\\
\!$ \bullet\  t_{1}\!\prec\! t_{2} $:\ \ {\color{mygray}\textit{time interval $ t_{1} $ is contained by time interval $ t_{2} $}}; \\
\!$ \bullet\  \textit{insC}(c,C) $:\ \ {\color{mygray}\textit{$ c $ is an instance of the abstract context $ C $}}; \\
\!$ \bullet\   \textit{hasR}(c,r) $:\ \  {\color{mygray}\textit{context $ c $ has role $ r $}}; \\
\!$ \bullet\  \textit{hasCoR}(c,r_{1},r_{2}) $:\ \  {\color{mygray}\textit{context $ c $ has the social relation $ (r_{1}, r_{2}) $ }}\\
\ \  {\color{mygray}\textit{generated by role $ r_{1} $ and $ r_{2} $}}; \\
\!$ \bullet\  \textit{play}(p,r,c,t) $:\ \  {\color{mygray}\textit{player $ p $ plays role $ r $ in context $ c $ during $ t $}}; \\
\!$ \bullet\  \textit{coPlay}(p_{1}, p_{2}, r_{1}, r_{2},c,t) $:\ \  {\color{mygray}\textit{$ p_{1} $ and $ p_{2} $ materialize the }} \!\!\!\\
\ \  {\color{mygray}\textit{social relation $ (r_{1}, r_{2}) $ in $ c $}}\\
\hline
\end{tabular}
}
\subfigure[]{
\begin{tabular}{|lrcl|}
\hline
\!(1)&$ \textit{insC}(c,C)$ \!\!\!\!\!\!\!\! &$\to$ &\!\!\!\!\!\!\!\! $ \textit{IC}(c)\wedge \textit{SC}(C) $\\
\!(2)&$ \textit{hasR}(c,r)$ \!\!\!\!\!\!\!\! &$\to$ &\!\!\!\!\!\!\!\! $ (\textit{SC}(c)\vee\textit{IC}(c))\wedge \textit{SR}(r) $\\
\!(3)&\!\!\!\!\!\!\!\!\!\!$ \textit{hasCoR}(\vec{x})$ \!\!\!\!\!\!\!\! &$\to$ &\!\!\!\!\!\!\!\! $ \textit{hasR}(\vec{x}^{1},\vec{x}^{2})\wedge \textit{hasR}(\vec{x}^{1},\vec{x}^{3}) $\\
\!(4)&$ \textit{play}(\vec{x})$ \!\!\!\!\!\!\!\! &$\to$ &\!\!\!\!\!\!\!\! $ \textit{PL}(\vec{x}^{1})\wedge\!\textit{SR}(\vec{x}^{2})\wedge\!\textit{ISC}(\vec{x}^{3})\wedge\!\textit{TI}(\vec{x}^{4})$\\
\!(5)&$ \textit{coplay}(\vec{x})$ \!\!\!\!\!\!\!\! &$\to$ &\!\!\!\!\!\!\!\! $ \textit{play}(\vec{x}^{1},\vec{x}^{3},\vec{x}^{5},\vec{x}^{6})\wedge \textit{play}(\vec{x}^{2},\vec{x}^{4},\vec{x}^{5},\vec{x}^{6})$\!\!\!\\
%%&&&\!\!\!\!\!\!\!\! $ \wedge \textit{ISC}(\vec{x}_{5})\wedge \textit{TI}(\vec{x}_{6})$\\
\hline
\end{tabular}
}
\caption{Primitive predicts used and their means (a) as well as the domain restrictions of the arguments of relational predicates (b), where $ \vec{x}_{i} $ denotes the $ i$-\textit{th} element of the vector $ \vec{x} $.}
\label{Fig:Predict1}
\end{figure}

%%===对关系的介绍，除了having-role, playing还有have-social_relation以及containment relation，还有实例具有的信息与抽象的场合所具有的信息之间的关系
%%===social relation的进一步介绍  
%%===time interval calculus的参考文献
%%
In Fig.\ref{Fig:Predict1}, the predicts $ \textit{hasCoR} $ and $ \textit{coPlay} $ respectively denote having and playing social relations. For example, $ \textit{hasCoR}(\textit{school}, \textit{teacher}, \textit{student}) $ denotes the context \textit{school} has the social relation generated by \textit{teacher} and \textit{student}, i.e., the \textit{teacher relationship}. And $ \textit{coPlay}(\textit{Lucy}, \textit{Bob}, \textit{teacher}, \textit{student}, c) $ denotes \textit{Lucy} and \textit{Bob} materialize (play) the \textit{teacher relationship} in \textit{c}, i.e., in context \textit{c}, \textit{Lucy} is \textit{Bob}'s teacher and \textit{Bob} is \textit{Lucy}'s student. Moreover, axiom 3 depicts that if a context has a social relation generated by two roles then this context also has these two roles, and if two players materialize a social relation in a context then they respectively play a corresponding role in this context. These two restrictions accord with our common sense. For example, if a context has the social relation \textit{teacher\_relationship}, then we naturally consider that this context has the roles \textit{teacher} and \textit{student}. And if someone tells us that \textit{Lucy} and \textit{Bob} are \textit{teacher\_relationship} in a university, then we also know that \textit{Lucy} and \textit{Bob} are respectively \textit{teacher} and \textit{student} of this university.

\subsection{The \textit{having-role}, \textit{having-social\_relation} and \textit{playing} relationships}  
The \textit{having-role} relationship between social contexts and roles are illustrated in the following two axioms. Concretely, axiom 6 depicts that each social context, no matter abstract or instance, has some social roles. And axiom 7 illustrates that each social role is linked into some social context. 
\[
\!\!\!\!\!\!\!\!\!
\begin{array}{lrcl}
(6)& \textit{SC}(c)\vee \textit{IC}(c)\!\!\!\!\!&\leftrightarrow&\!\!\!\! \exists r.\textit{hasR}(c,r)\\
%(7)& \!\!\!\!\!&\leftrightarrow&\!\!\!\! \exists r.\textit{hasR}(c,r)\\
(7)& \textit{SR}(r)\!\!\!\!\!&\leftrightarrow&\!\!\!\! \exists c.\textit{hasR}(c,r)\\
\end{array}
\]
%%====不同于
Different to the existing works on roles, in our model, the roles linked into instance contexts are role types rather than role instances. This is because we do not model role instances in our theory. More importantly, we want to capture the phenomenon that even if an instance context does not have any instances of a role in some period (job vacancy), it still has this role. 

The roles in social contexts are not isolated. They generate social relations in social contexts. In other world, social contexts have social relations: 
\[
\!\!\!\!\!\!\!\!\!
\begin{array}{ll}
(8)& \textit{SC}(c)\vee \textit{IC}(c)\to\exists r_{1},r_{2}.\textit{hasCoR}(c,r_{1},r_{2})\\
%(10)& \!\!\!\!\!&\to&\!\!\!\! \exists r_{1},r_{2}.\textit{hasR}(c,r_{1},r_{2})\\
\end{array}
\] 
For example, families have the social relationship of \textit{father\_and\_son} and schools have the social relationship of \textit{teacher\_and\_student}.  

And the roles and social relations linked into abstract contexts consist of the roles and social relations linked into their instances, i.e.,  instance contexts can just have roles and social relations linked into their abstract contexts and each role and each social relation of an abstract context will be inherited by one of its instance context:
 \[
\!\!\!\!
\begin{array}{ll}
(9) & \textit{SC}(C)\wedge \textit{hasR}(C,r)\leftrightarrow\exists c.\textit{insC}(c,C)\wedge \textit{hasR}(c,r)\\
(10) & \textit{SC}(C)\wedge \textit{hasCoR}(C,r_{1},r_{2})\\
     & \leftrightarrow \exists c.\textit{insC}(c, C)\wedge \textit{hasCoR}(c,r_{1},r_{2})\\
\end{array}
\]
Note that in axiom 9-10, ``$ \exists c $" cannot be replaced by ``$ \forall c $", since even for the contexts of the same types may contain different social roles. For example, some hospitals have the role of \textit{orthopedist}, while some do not. Based on axiom 9-10, the social roles and social relations of abstract contexts, i.e., context types, can be harvested from their instances.
%%这句话有语病

Next, we illustrate the \textit{playing} relationship among players, social roles and contexts.

The next axiom depicts that each player will play a social role and participate in a materialization of a social relation in an instance social context:
%will materialize a social relationship in an instance context with another player: 
\[
\!\!\!\!\!\!\!\!\!
\begin{array}{llll}
(11) & \textit{PL}(p)\!\!\!\!& \leftrightarrow &\!\!\!\!\exists r,c,t.\textit{play}(p,r,c,t)\\
&&&\!\!\!\wedge \exists r,p_{1},r_{1},c,t.\textit{coPlay}(p,p_{1},r,r_{1},c,t)\\
\end{array}
\]
And for an instance context, players can only play the roles and social relations linked into this context and each role and each social relation of this context will be played by some players: 
\[
\!\!\!\!\!\!\!\!\!
\begin{array}{ll}
(12)&\textit{IC}(c)\wedge \textit{hasR}(c,r)\leftrightarrow \exists p, t.\textit{play}(p,r,c,t)\\
(13)& \textit{IC}(c)\wedge \textit{hasCoR}(c,r_{1},r_{2})\\
    & \leftrightarrow \exists p_{1},p_{2},t.\textit{coPlay}(p_{1},p_{2},r_{1},r_{2},c,t)\\
\end{array}
\]
Then by axiom 12-13, the social roles and social relations linked into instances contexts can be excavated from the roles and social relations exhibited by the players in these contexts.  
%%excavated from

Moreover, as illustrated in the axioms below, the \textit{playing} and \textit{co-playing } relationships are enduring: 
%%enduring
\[
\!\!\!\!\!\!\!\!\!
\begin{array}{crcl}
(14)&\textit{play}(\vec{x},t)\!\!\!\!&\to&\!\!\!\!\forall t_{1}.(t_{1}\!\prec t\to \textit{play}(\vec{x},t_{1}))\\
(15)&\textit{coPlay}(\vec{x},t)\!\!\!\!&\to&\!\!\!\!\forall t_{1}.(t_{1}\!\prec t\to \textit{coPlay}(\vec{x},t_{1}))\\
\end{array}
\]
For example, if \textit{Bob} is a student of a school $ s $ during time interval $ t $, i.e., $ \textit{play}(\textit{Bob}, \textit{student}, s, t) $, then for each time interval $ t_{1} $ contained  by $ t $, \textit{Bob} is also a student of $ s $ during $ t_{1} $, i.e., $ \textit{play}(\textit{Bob}, \textit{student}, s, t_{1}) $ holds.

\subsection{The \textit{containment} relationship} 
Besides \textit{having-role}, \textit{having-social\_relation} and \textit{playing}, there universally exist \textit{containment} relation among abstract social contexts and among social roles. For example, it is generally acknowledged that each middle school is also a school and each middle school student is also a student. Thus, the context \textit{middle\_school} and role \textit{middle\_school\_student} are respectively sub-concepts of the context \textit{school} and role \textit{student}.

In our model, both abstract contexts and instance contexts are captured. Thus, the \textit{containment} relation among abstract social contexts can easily be captured by the containment relationship of their instances.

\begin{myDef} 
We use $ \textit{isAC}(C_{1},C_{2}) $ to denote that abstract context $ C_{1} $ is a sub-context of abstract context $ C_{2} $, and it is formally defined as:
\[
\!\!\!
\begin{array}{clll}
(d1) & \textit{isAC}(C_{1},C_{2}) \!\!\!\!&\stackrel{\textit{def}}{=}&\!\!\!\!\textit{SC}(C_{1})\wedge \textit{SC}(C_{2})\\
&&&\!\!\!\!\wedge(\forall c.\textit{insC}(c, C_{1})\to \textit{insC}(c, C_{2}))\\
\end{array}
\]
\label{def:isAC}
\end{myDef}

As mentioned earlier, for simplicity and clarity, we do not generate social role instances. Instead, concrete roles of instance contexts are denoted by players attached with social roles. Therefor, as illustrated in the definition below, the \textit{containment} relation among abstract social roles can be depicted from the player perspective.  

\begin{myDef}
We use $ \textit{isAR}(r_{1},r_{2}) $ to denote role $ r_{1} $ is a sub-role of role $ r_{2} $, and it is formally defined as:
\[
\!\!\!\!\!\!\!\!\!\!
\begin{array}{clll}
(d2)& \textit{isAR}(r_{1},r_{2}) \!\!\!\!& \stackrel{\textit{def}}{=} &\!\!\!\! \textit{SR}(r_{1})\wedge \textit{SR}(r_{2}) \\
&&&\!\!\!\!\wedge(\forall p,c,t.\textit{play}(p,r_{1},c,t) \\
&&&\qquad\quad\ \ \to\textit{play}(p,r_{2},c,t))\\
\end{array}
\]
\label{def:isAR}	
\end{myDef} 
%%角色在IsAC之间的继承关系

Based on Definition \ref{def:isAC} and \ref{def:isAR}, we can further obtain the reflexivity and transitivity of the \textit{sub-context} and \textit{sub-role} relationships as well as the extensions of \textit{having-role} on the \textit{sub-role} and \textit{sub-context} relations , i.e., if a context $ c $ has role $ r $ then for each sup-role $ r' $ of $ r $, $ c $ also has the role $ r' $, and for each sup-context $ c' $ of $ c $, $ c' $ also  has the role $ r $.
%%====此结果很有意思

\begin{myTheo}
For contexts and roles, axiom $ c1 $-$ c6 $ hold: 
\[
\!\!\!\!\!\!\!\!
\begin{array}{crcl}
(c1)&\textit{SC}(C)\!\!\!\!&\to&\!\!\!\! \textit{isAC}(C,C) \\
(c2)&\textit{SR}(r)\!\!\!\!&\to&\!\!\!\!\textit{isAR}(r,r)\\
(c3)&\textit{isAC}(C_{1},C_{2})\wedge\textit{isAC}(C_{2},C_{3})\!\!\!\!&\to&\!\!\!\! \textit{isAC}(C_{1},C_{3})\\
(c4)&\textit{isAR}(r_{1},r_{2})\wedge\textit{isAR}(r_{2},r_{3})\!\!\!\!&\to&\!\!\!\! \textit{isAR}(r_{1},r_{3})\\
(c5)&\textit{hasR}(c,r_{1})\wedge\textit{isAR}(r_{1},r_{2})\!\!\!\!&\to&\!\!\!\! \textit{hasR}(c,r_{2})\\
(c6)&\textit{hasR}(C_{1},r)\wedge \textit{isAC}(C_{1},C_{2})\!\!\!\!&\to&\!\!\!\! \textit{hasR}(C_{2},r)\\
\end{array}
\]
\label{The:IsA}
\end{myTheo}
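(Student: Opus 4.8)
The plan is to unfold Definitions~\ref{def:isAC} and~\ref{def:isAR} and then dispatch the six claims in three groups. The reflexivity statements $c1$ and $c2$ are immediate: by $(d1)$, $\textit{isAC}(C,C)$ abbreviates $\textit{SC}(C)\wedge\textit{SC}(C)\wedge\forall c\,(\textit{insC}(c,C)\to\textit{insC}(c,C))$, whose last conjunct is a propositional tautology, so $\textit{SC}(C)$ alone entails the whole formula; $c2$ is the same observation applied to $(d2)$, where the embedded implication $\textit{play}(p,r,c,t)\to\textit{play}(p,r,c,t)$ is trivially valid. The transitivity statements $c3$ and $c4$ are almost as easy: unfolding $(d1)$ on both hypotheses of $c3$ yields $\textit{SC}(C_{1})$, $\textit{SC}(C_{3})$, and the two inclusions $\forall c\,(\textit{insC}(c,C_{1})\to\textit{insC}(c,C_{2}))$ and $\forall c\,(\textit{insC}(c,C_{2})\to\textit{insC}(c,C_{3}))$, which compose into $\forall c\,(\textit{insC}(c,C_{1})\to\textit{insC}(c,C_{3}))$, hence $\textit{isAC}(C_{1},C_{3})$; and $c4$ is the identical argument with $(d2)$ and the chaining of $\textit{play}$-implications.

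The substantive work is in $c5$ and $c6$, where the player-level definition of $\textit{isAR}$ and the instance-level definition of $\textit{isAC}$ must be transported up to the context-level predicate $\textit{hasR}$. For $c5$ I would case-split on $\textit{hasR}(c,r_{1})$ via axiom~2, which forces $\textit{SC}(c)\vee\textit{IC}(c)$ together with $\textit{SR}(r_{1})$. In the instance case $\textit{IC}(c)$, axiom~12 (left to right, with role $r_{1}$) turns $\textit{hasR}(c,r_{1})$ into $\exists p,t\,\textit{play}(p,r_{1},c,t)$; the implication inside $(d2)$ for $\textit{isAR}(r_{1},r_{2})$ upgrades this to $\exists p,t\,\textit{play}(p,r_{2},c,t)$; and axiom~12 (right to left, now with role $r_{2}$) recovers $\textit{hasR}(c,r_{2})$. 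In the abstract case $\textit{SC}(c)$, axiom~9 (left to right, with context $c$ and role $r_{1}$) produces an instance $c'$ with $\textit{insC}(c',c)\wedge\textit{hasR}(c',r_{1})$; axiom~1 gives $\textit{IC}(c')$, so the instance case already established yields $\textit{hasR}(c',r_{2})$, and a second use of axiom~9 (right to left, with context $c$ and role $r_{2}$) lifts this back to $\textit{hasR}(c,r_{2})$.

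Statement $c6$ follows the same ``descend to an instance, then climb back up'' pattern, and is cleaner because $\textit{isAC}$ is already formulated at the instance level. Unfolding $(d1)$ on $\textit{isAC}(C_{1},C_{2})$ supplies $\textit{SC}(C_{1})$, $\textit{SC}(C_{2})$, and $\forall c\,(\textit{insC}(c,C_{1})\to\textit{insC}(c,C_{2}))$; then axiom~9 (left to right) turns $\textit{hasR}(C_{1},r)$ into an instance $c$ with $\textit{insC}(c,C_{1})\wedge\textit{hasR}(c,r)$, the inclusion rewrites $\textit{insC}(c,C_{1})$ as $\textit{insC}(c,C_{2})$, and axiom~9 (right to left, with context $C_{2}$) delivers $\textit{hasR}(C_{2},r)$. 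I expect the main obstacle to be purely bookkeeping: applying axioms~9 and~12 in the correct direction and with the correct substitution for their quantified role/context variables, and --- in $c5$ --- not forgetting the $\textit{SC}(c)$ case, since $\textit{hasR}$ ranges over both abstract and instance contexts whereas $\textit{isAR}$ is defined solely through $\textit{play}$, whose context argument is always an instance context (axiom~4).
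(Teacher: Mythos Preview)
Your proposal is correct and follows exactly the route indicated in the paper's sketch: $c1$--$c4$ drop out of the definitions $(d1)$--$(d2)$ by the obvious tautology and composition arguments, $c5$ is obtained from axioms~9, 12 and $(d2)$ (with the case split on $\textit{SC}(c)$ versus $\textit{IC}(c)$ that the paper leaves implicit), and $c6$ from axiom~9 together with $(d1)$. The only addition you make beyond the paper's sketch is spelling out the descent-to-an-instance/climb-back-up mechanism and invoking axioms~1 and~2 for the typing side conditions, which is exactly the bookkeeping the sketch omits.
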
 
\begin{proof}
(Sketc) By axiom $ d1 $-$ d2 $, axiom $ c1$-$c4 $ hold trivially. By axiom 9, 12 and $ d2 $, axiom $ c5 $ holds. And by axiom 9 and $ d1 $, axiom $ c6 $ holds. 
%%证明的细化
\end{proof}

Note that, the extensions of \textit{having-role} on the \textit{sub-role} and \textit{sub-context} relations accord with the common sense in the real world. For example, \textit{teacher} is a sub-role of \textit{staff}, thus if a school has the role \textit{teacher} then we naturally consider that this school also has the role \textit{staff}. On the other hand, no matter the \textit{professor} role in \textit{university} or the \textit{middle\_school\_teacher} in \textit{middle\_school}, they are both roles in the \textit{school} category. 

\subsection{Example and some comment}     

\begin{myExm}
The description "\textit{Lucy is a teacher of student Bob in university u during t. Meanwhile, Lucy is also a doctor of hospital h. Moreover, Lucy is a doctor of hospital h' during t'. Furthermore, teacher is a sub-role of staff.}", can be formalized by our model as the following statements: 
\[
\!\!\!\!\!
\begin{array}{|ll|}
\hline
\!(e1) &\textit{coPlay}(\textit{Lucy}, \textit{Bob}, \textit{teacher}, \textit{student}, u, t)\\
\!(e2)&\textit{play}(\textit{Lucy},\textit{doctor}, h, t),\ \textit{play}(\textit{Lucy},\textit{doctor}, h', t') \\
\!(e3)&\textit{insC}(u, \textit{University}), \textit{insC}(h, \textit{Hospital})\\
\!(e4)& \textit{insC}(h', \textit{Hospital}), \textit{isAR}(\textit{teacher},\textit{staff}) \\
\hline
\end{array}
\]
By the axioms in the above subsections, we can get that $ Lucy $ is a teacher and a staff of the university $ u $, i.e.,  $ \textit{play}(\textit{Lucy}, \textit{teacher}, u, t) $ and $ \textit{play}(\textit{Lucy}, \textit{staff}, u, t) $ hold, as well as other conclusions, such as both $ u $ and \textit{University} having the roles \textit{student}, \textit{teacher} and \textit{staff}.  
\label{Exm:PlayRole}
\end{myExm}

As illustrated in Example \ref{Exm:PlayRole}, in our model, both playing multiple roles simultaneously and playing a same role several times are supported. Moreover, players and social roles have the same identifiers. Finally, we emphasize that currently, we just devote ourselves to capture the general and main relations among contexts, roles and players. Other relationships, such as some social roles having only one player in a same context and same time interval, can be easily modeled by the predicts provided in this section.
%%====social roles与Steimann中的大多数性质相吻合
%%====social roles满足Motoso中的anti-rigid性质
%%====语句的修改
%%====我们只刻画一些普遍的关系，其它特殊的关系，可以根据情况来刻画。 

\section{Events, Norms, Goals and Desires of Social Contexts, Roles and Players}       
 
Besides the interrelationships, social contexts, roles, social relations and players have their intrinsic content. Take the role \textit{teacher} as an example. Besides the contexts it linked into, we can also associate with the things such as \textit{teaching} and \textit{hoping his students study hard}. Therefor, in this section, we provide a four level structure (Fig. \ref{Fig:FiveS}) to depict the intrinsic information of social contexts, roles, social relations and players mainly from the perspectives of \textit{event}, \textit{norm}, \textit{goal} and \textit{desire}, and illustrate the way of harvesting the intrinsic information of social contexts, roles and relations from the performance of players in concrete contexts. Other elements, such as \textit{belief}, can be analyzed in a similar way.    
%%===刻画的粒度从抽象到具体，从粗到细
%%===主要的目的是获取原知识，原知识只能从实例中抽取
%%===其它的情况可以类似地分析
%%===其它的元素可以进行类似的分析
%%===刻画social contexts, roles特有的信息和无关的信息

\begin{figure}[!htbp] 
	\centering
	\small
	\includegraphics[scale=0.9]{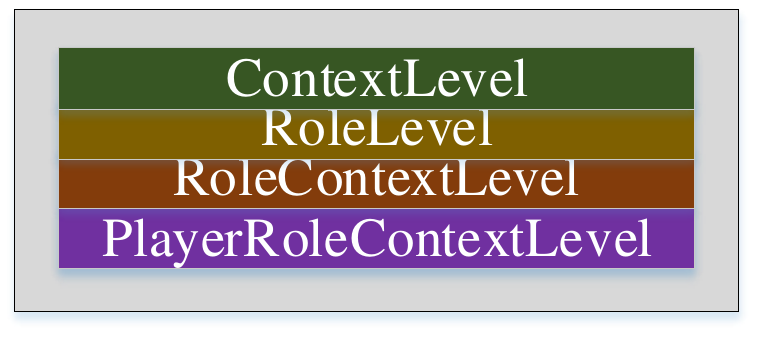}
	\caption{Four level structure of inherent information of social contexts, roles and players.}
	\label{Fig:FiveS}	
\end{figure}

The primitive predicts used and their means as well as the restrictions of the arguments of relational predicts are illustrated in Fig. \ref{Fig:Predict2}. Next, we explain each level gradually. In the following, we use $ \mathcal{S} $ to denote $\{E,N,D,G \}$.

%%===是否添加hasCoRoleEvent 
\begin{figure}[!htbp] 
\centering
\small
\subfigure{
\begin{tabular}{|l|}   
	\hline
	\!$ \bullet\ \textit{Event}(e) $, $ \textit{Norm}(n) $:\  {\color{mygray}\textit{$ e $ is an event, $ n $ is a norm}}; \\
	\!$ \bullet\ \textit{Goal}(g) $, $ \textit{Desire}(d) $:\  {\color{mygray}\textit{$ g $ is a goal, $ d $ is a desire.}}; \\
	\!$ \bullet\ \textit{hasE}^{c}(c,e) $:\ {\color{mygray}\textit{context $ c $ has event $ e $.}}\\
	\!$ \bullet\ \textit{hasE}^{r}(r,e) $:\ {\color{mygray}\textit{role $ r $ has event $ e $.}}\\
	\!$ \bullet\ \textit{hasE}^{cor}(r_{1},r_{2},e) $:\ {\color{mygray}\textit{role $ r_{1} $ has event $ e $ to role $ r_{2} $.}}\\
	\!$ \bullet\ \textit{hasE}^{rc}(r,c,e) $:\ {\color{mygray}\textit{role $ r $ has event $ e $ in context $ c $.}}\\
	\!$ \bullet\ \textit{hasE}^{rc}(r_{1},r_{2},c,e) $:\ {\color{mygray}\textit{role $ r_{1} $ has event $ e $ on role $ r_{2} $ in context $ c $.}}\!\\
	\!$ \bullet\ \textit{hasE}_{p}^{rc}(p,r,c,e,t) $:\  {\color{mygray}\textit{player \textit{p} as role \textit{r} in context \textit{c} has event \textit{e}}}\\
	\ \ \! {\color{mygray} \textit{during t}}.\\
	\!$ \bullet\ \textit{hasE}_{cop}^{rc}(p_{1},p_{2},r_{1},r_{2},c,e,t) $:\ {\color{mygray}\textit{player $ p_{1} $ as role $ r_{1} $ in context $ c $ }} \!\!\!\!\!\!\!\!\!\!\!\!\!\\
	 \ \ \!{\color{mygray}\textit{ has event \textit{e} for player $ p_{2} $ as role $ r_{2} $ during \textit{t}}}. \\
	\hline
\end{tabular}  
}
\subfigure{  
%	\[
%	\begin{array}{lrcl}
%	 (ab)&\textit{hasE}^{c}(c,e)&\to& (\textit{Cont}(c)\vee\textit{ICont}(c))\wedge \textit{Event}(e) $\\
%	 (ab)&\textit{hasE}^{r}(r,e)&\to&\textit{SRole}(r)\wedge \textit{Event}(e) $\\
%	 (ab)&\textit{hasE}^{rc}(r,c,e)&\to&\textit{hasRole}(c,r)\wedge \textit{Event}(e) $\\
%	 (ab)&\textit{hasE}^{corc}(r_{1},r_{2},c,e)&\to&\textit{hasCoRole}(c,r_{1},r_{2})\wedge \textit{Event}(e) $\\
%	 (ab)&\textit{hasE}_{p}^{rc}(p,r,c,e,t)&\to& \textit{play}(p,r,c,t)\wedge \textit{Event}(e) $\\
%	 (ab)&\textit{hasE}_{cop}^{rc}(\vec{x},e,t)&\to& \textit{coPlay}(\vec{x},t)\wedge\textit{Event}(e) $\\
%	\end{array}
%	\]
\begin{tabular}{|crcl|}   
	\hline
	\!(16)&$ \textit{hasE}^{c}(c,e)$\!\!\!\!\!\!&$\to$&\!\!\!\!\!\!$ (\textit{SC}(c)\vee\textit{ISC}(c))\wedge \textit{Event}(e) $\\
	\!(17)&$ \textit{hasE}^{r}(r,e)$\!\!\!\!\!\!&$\to$&\!\!\!\!\!\!$\textit{SR}(r)\wedge \textit{Event}(e) $\\
	\!(18)&$ \textit{hasE}^{cor}(r_{1}, r_{2},e)$\!\!\!\!\!\!&$\to$&\!\!\!\!\!\!$\textit{SR}(r_{1})\wedge\textit{SR}(r_{2})\wedge \textit{Event}(e) $\\
	\!(19)&$ \textit{hasE}^{rc}(r,c,e)$\!\!\!\!\!\!&$\to$&\!\!\!\!\!\!$\textit{hasR}(c,r)\wedge \textit{Event}(e) $\\
	\!(20)& $ \textit{hasE}^{corc}(r_{1},r_{2},c,e)$\!\!\!\!\!\!&$\to$&\!\!\!\!\!\!$\textit{hasCoR}(c,r_{1},r_{2})\wedge \textit{Event}(e) $\ \ \ \!\!\\
	\!(21)&$ \textit{hasE}_{p}^{rc}(p,r,c,e,t)$\!\!\!\!\!\!&$\to$&\!\!\!\!\!\!$ \textit{play}(p,r,c,t)\wedge \textit{Event}(e) $\\
	\!(22)&$ \textit{hasE}_{cop}^{rc}(\vec{x},e,t)$\!\!\!\!\!\!&$\to$&\!\!\!\!\!\!$ \textit{coPlay}(\vec{x},t)\wedge\textit{Event}(e) $\\
	\hline
\end{tabular}  		
}
\subfigure{
\begin{tabular}{|p{8cm}|} 
	\hline
	The means, dimensions and restrictions of the predicts $ \textit{hasX}^{c} $, $ \textit{hasX}^{r} $, $ \textit{hasX}^{rc} $, $ \textit{hasX}^{corc} $, $ \textit{hasX}_{p}^{rc} $, $ \textit{hasX}_{cop}^{rc} $ can be obtained similarly, where $ X\in\{N, D, G \} $.\\
	\hline
\end{tabular}
%%hasCoR, event, norm, goal, desire.		
}
\caption{Predicts used and their means as well as the restrictions of the arguments of multivariate predicates.}
\label{Fig:Predict2}	
\end{figure}
%%===为了简化，不考虑具体的event的实例，player加上event变成event的实例
%%===简化考虑的元素的个数
%%===并不关心具体的实例
%%===角色和场合无关的事件

\paragraph{PlayerRoleContextLevel:}
%%====开场白信息如何介绍
As a social role in a social context, a player will generate behaviors and mental attitudes and obey some norms relating to this role it playing. Take a teacher in an university as an example. During the position, it needs to teaching, and at the same time, it needs to write papers and hopes to be promoted. This level models such common sense of players in social contexts as the following axiom 23: 
\[
\!\!\!\!\!\!\!\!\!
\begin{array}{crcl}
(23)&\textit{play}(\vec{x},t)\!\!\!\!&\to&\!\!\!\!\bigwedge_{X\in\mathcal{S}}(\exists x,t_{1}.\textit{hasX}_{p}^{rc}(\vec{x},x,t_{1})\\
&&&\qquad\qquad\quad\wedge t_{1}\!\!\prec t)\\
(24)&\textit{coPlay}(\vec{x},t)\!\!\!\!&\to&\!\!\!\!\bigwedge_{X\in\mathcal{S}}(\exists x,t_{1}.\textit{hasX}_{coP}^{rc}(\vec{x},x,t_{1})\\
&&&\qquad\qquad\quad\wedge t_{1}\!\!\prec t)\\
\end{array}
\]   
On the other hand, when materializing a social relation in an context with another player, one player will also generate soem behaviors to the another player to interpret the meaning of this social relation. For example, a teacher in an university needs to guide its students, it may also criticize its student. And a mother in a family needs to care for her daughters and sons, and at the same time hope her children in good health. This level models such common sense of instances of social relations in social contexts as the above axiom 24. 

%And the information of one player to another player is also the information of this player:
%\[
%\!\!\!\!\!\!\!\!\!\!
%\begin{array}{ll}
%(26)&\bigwedge_{X\in\mathcal{S}}\big( \textit{hasX}_{cop}^{rc}(p_{1},p_{2},r_{1},r_{2},c,x,t)\\
%&\qquad\qquad\to\textit{hasX}_{p}^{rc}(p_{1},r_{1},c,x,t)\big)\\
%\end{array}
%\]
%%co-play与play的信息之间的关系

%\vspace{0.5em}
%\paragraph{PlayerLevel:}
%%%角色相关的，角色无关的。
%As mentioned earlier, we mainly consider human beings players. Every living person has behaviors and mental attitudes in his everyday life, such as working, getting sick and promoting. The \textit{PlayerLevel} captures such intrinsic information of players as the following axiom:
%\[
%\!\!\!\!\!\!\!\!\!\!
%\begin{array}{ll}
%(27)&\textit{PL}(p)\to\bigwedge_{X\in\mathcal{S}}(\exists x,t.\textit{hasX}_{p}(p,x,t)) \\
%\end{array}
%\]
%Some of these content are related to the roles this person playing, such as working and promoting, and others are role-unrelated, such as getting sick. The following axiom illustrates the relationship between the information of players in contexts and players:
%\[
%\!\!\!\!\!\!\!\!\!\!
%\begin{array}{ll}
%(28)& \bigwedge_{X\in\mathcal{S}}(\exists r,c.\textit{hasX}_{p}^{rc}(p,r,c,x,t) \to \textit{hasX}_{p}(p,x,t)) \\
%\end{array}
%\] 

\paragraph{RoleContextLevel:}  
%%===开场白信息如何介绍
For the role \textit{student} in schools, we can associate with \textit{attending\_class} and \textit{examination}. And for the role \textit{doctor} in hospitals, we can associate with \textit{treating} patients and forbidding bribery from patients and their relations. This means that social roles and relations in contexts have their intrinsic information. The \textit{RoleContextLevel} captures such common sense of social roles and relations in social contexts as the following two axioms:
\[
\!\!\!
\begin{array}{crcl}
(25)&\textit{hasR}(c,r) \!\!\!\!&\to&\!\!\!\!\bigwedge_{\textit{X}\in \mathcal{S} }(\exists x.\textit{hasX}^{rc}(r,c, x) )\\
(26)&\textit{hasCoR}(c,r_{1},r_{2}) \!\!\!\!\!&\to&\!\!\!\!\!\bigwedge_{\textit{X}\in \mathcal{S} }(\exists x.\textit{hasX}^{corc}(r_{1},r_{2}, c, x)) \\ 
\end{array} 
\]
%\[
%\!\!\!\!
%\begin{array}{lrcl}
%(27)&\textit{hasCoR}(c,r_{1},r_{2}) \!\!\!\!\!&\to&\!\!\!\!\!\bigwedge_{\textit{X}\in \mathcal{S} }(\exists x.\textit{hasX}^{corc}(r_{1},r_{2}, c, x)) \\
%\end{array}
%\]

And the intrinsic information of roles and social relations in instance contexts consist of the information exhibited by the players of these roles and social relations in these contexts, i.e., each player of a role (social relation) in a context can only have information linked to this role (social relation) in this context and each information linked to a role (social relation) in a context will be inherited by one of its player:
\[
\!\!\!\!\!\!\!\!\!\!
\begin{array}{ll}
(27)&\bigwedge_{X\in\mathcal{S}}\big(\textit{ISC}(c)\wedge \textit{hasX}^{rc}(r,c,x)\\
&\qquad\quad\ \leftrightarrow \exists p,t.\textit{hasX}_{p}^{rc}(p,r,c,x,t)\big)\\
(28)&\bigwedge_{X\in\mathcal{S}}\big(\textit{ISC}(c)\wedge \textit{hasX}^{corc}(r_{1},r_{2},c,x)\\
&\qquad\quad\ \leftrightarrow \exists p,t.\textit{hasX}_{cop}^{rc}(p_{1},p_{2},r_{1},r_{2},c,x,t)\big)\\
\end{array} 
\]
Note that in axiom 27-28, ``$ \exists p $" and ``$ \exists p_{1},p_{2} $ cannot be replaced by ``$ \forall p$" and ``$ \forall p_{1}, p_{2} $, since not all information linked into roles and social relations will be exhibited by all of their players. For example, both \textit{attending\_class} and \textit{skipping\_class} are events linked into the role \textit{student} in a school, but not each student of this school will carry out both of these two events. By axiom 27-28, the intrinsic information of social roles and social relations in instance contexts can be gathered from the information exhibited by the players of the roles and materialization of the social relations in the contexts. For example, the events linke into the role \textit{student} in a concrete \textit{school} $ s $ can be gathered from the events exhibited by the students of \textit{s}, i.e.:
\[
\!\!\!\!\!\!\!\!\!
\begin{array}{ll}
(e5) & \textit{hasE}^{rc}(\textit{student}, \textit{s}, e) \exists p, t.\textit{hasE}_{p}^{rc}(p, \textit{student}, c, e, t)\\
%&\leftrightarrow\\
\end{array}
\] 

Similarly, the intrinsic information linked into the roles and social relations in abstract contexts consist of the information linked into the roles and social relations in their instance contexts, i.e., a role (social relation) in an instance context can only have information linked to this role (social relation) in the corresponding abstract contexts and each piece of information linked into this role (social relation) in an abstract context will be inherited by this role (social relation) in an instance context : 
\[
\!\!\!\!\!\!\!\!\!\!
\begin{array}{ll}
(29)&\bigwedge_{X\in\mathcal{S}}\big(\textit{SC}(C)\wedge\textit{hasX}^{rc}(r,C,x)\\
&\qquad\quad\leftrightarrow \exists c.\textit{insC}(c,C)\wedge\textit{hasX}^{rc}(r,c,x)\big)\\ 
(30)&\bigwedge_{X\in\mathcal{S}}\big(\textit{SC}(C)\wedge\textit{hasX}^{corc}(r_{1},r_{2}, C,x)\\
&\qquad\quad\leftrightarrow \exists c.\textit{insC}(c,C)\wedge\textit{hasX}^{corc}(r_{1},r_{2},c,x)\big)\\ 
\end{array}
\] 
Analogous to axiom 27-28, in axiom 29-30, ``$ \exists c $" cannot be replaced by ``$ \forall c $". For example \textit{skipping\_class} is an event linked into \textit{student} of \textit{school}, but some schools may not happen the event of \textit{skipping\_class}. According to axiom 29-30, the intrinsic information linked into roles and social relations in abstract contexts can be harvested from the information linked into these roles and social relations in their instance contexts. For example, the events linke into the role \textit{student} in \textit{school} can be obtained from the events linked into the role \textit{student} in concrete schools, i.e.:
\[
\!\!\!\!\!\!\!\!\!\!\!
\begin{array}{ll}
(e6) & \textit{hasE}^{rc}(\textit{student}, \textit{school}, e)\\
     &\leftrightarrow \exists c.\textit{insC}(c, \textit{student})\wedge \textit{hasE}^{rc}(\textit{student}, c, e)\\
\end{array}
\] 

Combining axiom 27-30, the intrinsic information linked into roles and social relations in abstract contexts, i.e., context types, can finally be harvested from the performance of the players of these roles and social relations in instance contexts. 

%%\exists不能变成任意，上课、逃课都是学校里面学生的行为，但并不是每个学生都会逃课或上课
\paragraph{RoleLevel:}   
From the role and social relation perspective, for the event \textit{teaching}, we can associate with the role \textit{teacher} rather than \textit{doctor}, and for the event \textit{operation}, we can associate with \textit{doctor\_patient} rather than \textit{teacher\_student}. Therefor, social roles and social relations have their intrinsic information. And the \textit{RoleLevel} captures such common sense of roles and social relations as the following two axioms: 
\[
\!\!\!\!\!\!\!\!\!
\begin{array}{crcl}
(31) &\textit{SR}(r)\!\!\!\!&\to&\!\!\!\! \bigwedge_{\textit{X}\in\mathcal{S}}(\exists x.\textit{hasX}^{r}(r, x))\\
(32) &\textit{SR'}(r_{1},r_{2})\!\!\!\!&\to&\!\!\!\! \bigwedge_{\textit{X}\in\mathcal{S}}\big(\exists x.\textit{hasX}^{cor}(r_{1},r_{2}, x))\\ 
\end{array}
\] 
where $ \textit{SR'}(r_{1}, r_{2}) $ denotes $ (r_{1}, r_{2}) $ is a social relation.

And the intrinsic information linked into social roles and social relations consist of the intrinsic information linked into these roles and relations in social contexts, i.e., social roles (relations) in contexts can only have information linked into these roles (relations) and each piece of information linked into a role (relation) will be exhibited by this role (relation) in a social context:
\[
\!\!\!\!\!\!\!\!\!\!
\begin{array}{ll}
(33) & \bigwedge_{X\in\mathcal{S}}\big(\textit{hasX}^{r}(r,x)\leftrightarrow\exists C.\textit{hasX}^{rc}(r,C,x)\big)\\
(34) & \bigwedge_{X\in\mathcal{S}}\big(\textit{hasX}^{cor}(r_{1},r_{2},  x)\\
&\qquad\quad\leftrightarrow\exists C.\textit{hasX}^{corc}(r_{1},r_{2},C,x)\big)\\
\end{array}
\] 

Based on axiom 33-34, the intrinsic information of social roles and relations can be excavated from the intrinsic information linked into these roles and relations in social contexts. For example, the events of the role \textit{doctor} can be obtained from the events of \textit{doctor} in social contexts, i.e.,:
\[
\!\!\!\!\!\!\!\!\!\!
\begin{array}{ll}
(e7)&\textit{hasE}^{r}(\textit{student}, e)\leftrightarrow \exists C.\textit{hasE}(\textit{student}, C, e)\\
\end{array}
\]

%Based on the above axioms, we can further obtain the inheritance of the intrinsic information from sub-roles to sup-roles, i.e., the intrinsic information linked to sub-roles is also the intrinsic information of their sup-roles. Such conclusion is illustrated in the theorem below.  
%
%\begin{myTheo}
%The following axiom holds:
%\[
%\!\!\!
%\begin{array}{ll}
%(c7)& \bigwedge_{X\in\mathcal{S}}\!\big(\textit{hasX}^{r}(r_{1},x)\wedge\textit{isAR}(r_{1},r_{2})\!\to \textit{hasX}^{r}(r_{2},x)\big)\\
%\end{array}
%\]		
%\end{myTheo}  
%\begin{proof}
%(Sketch) By the above two axioms, this axiom holds.
%\end{proof}
%Such inheritance accords with the common sense of our real world, since no matter the information is linked into \textit{middle school student} or \textit{university student}, we also consider this information is linked into \textit{school}. 

\paragraph{ContextLevel:} 
%%描述场合内出现的一些information
From the social context perspective, for the event \textit{teaching}, we can associate with \textit{school} rather \textit{hospital}, and for the event \textit{operation}, we can associate with \textit{hospital} rather than \textit{school}. This means that social contexts have their intrinsic information. And the \textit{ContextLevel} captures the intrinsic information of social contexts as the following axiom:
\[
\!\!\!\!\!\!\!\!\!
\begin{array}{ll}
(35)&\textit{SC}(c)\vee \textit{ISC}(c)\!\to\! \bigwedge_{X\in\mathcal{S}}(\exists x.\textit{hasX}^{c}(c, x))\\
\end{array}
\] 

And the intrinsic information of a social context consists of the intrinsic information linked into the roles and social relations of this context, i.e., roles and social relations can only have information linked into this context and each piece of information of this context will be inherited by a role or relation of this context: 
\[
\!\!\!\!
\begin{array}{llll}
(36)&\bigwedge_{X\in\mathcal{S}}\big(\textit{hasX}^{c}(c,x) \\
 & \leftrightarrow \exists r.\textit{hasX}^{rc}(r,c,x)\vee\exists r_{1},r_{2}.\textit{hasX}^{corc}(r_{1},r_{2},c,x) \big) \\
\end{array} 
\]

Then we can further obtain that the intrinsic information linked into abstarct contexts consists of the intrinsic information linked into their instances, as well as the inheritance of the intrinsic information from sub-contexts to sup-contexts, i.e., the intrinsic information linked to sub-contexts is also the intrinsic information of their sup-contexts. These conclusions are illustrated in the theorem below.

\begin{myTheo}
For social contexts, axiom $ c7 $-$ c8 $ hold: 
\[
\!\!\!\!\!\!\!\!\!\!
\begin{array}{ll}
(c7) &\bigwedge_{X\in\mathcal{S}}\big(\textit{SC}(C)\wedge \textit{hasX}^{c}(C, x)\\
&\qquad\quad\leftrightarrow\exists c. \textit{insC}(c, C)\!\wedge\!\textit{hasX}^{c}(c,x)\big)\\
(c8)&\bigwedge_{X\in\mathcal{S}}\big(\textit{hasX}^{c}(C_{1},x)\wedge\textit{isAC}(C_{1},C_{2})\\
    &\qquad\quad\to \textit{hasX}^{c}(C_{2},x)\big)\\
\end{array}
\]
\end{myTheo}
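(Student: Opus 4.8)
The plan is to prove both conjuncts uniformly in the parameter $X\in\mathcal{S}$: I would fix an arbitrary $X$, establish the corresponding instances of $c7$ and $c8$, and then conjoin over $\{E,N,D,G\}$, since no step depends on the particular choice. The ingredients needed are axiom 36 (relating the content $\textit{hasX}^{c}$ of a context to the $\textit{hasX}^{rc}$ and $\textit{hasX}^{corc}$ content of its roles and social relations), the instance-inheritance axioms 29--30 (relating the $\textit{hasX}^{rc}$/$\textit{hasX}^{corc}$ content of an abstract context to that of its instances), axiom 1, and Definition \ref{def:isAC}.

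For $c7$ I would argue the two directions of the biconditional separately. Left-to-right: assume $\textit{SC}(C)\wedge\textit{hasX}^{c}(C,x)$. Instantiating axiom 36 at $C$ gives either $\textit{hasX}^{rc}(r,C,x)$ for some $r$ or $\textit{hasX}^{corc}(r_{1},r_{2},C,x)$ for some $r_{1},r_{2}$. In the first case the forward direction of axiom 29 (using $\textit{SC}(C)$) produces an instance $c$ with $\textit{insC}(c,C)\wedge\textit{hasX}^{rc}(r,c,x)$; re-applying axiom 36 at $c$ turns $\textit{hasX}^{rc}(r,c,x)$ into $\textit{hasX}^{c}(c,x)$, so $\exists c.\,\textit{insC}(c,C)\wedge\textit{hasX}^{c}(c,x)$. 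The second case is identical with axiom 30 in place of axiom 29. Right-to-left: assume $\textit{insC}(c,C)\wedge\textit{hasX}^{c}(c,x)$; axiom 1 gives $\textit{SC}(C)$, and axiom 36 at $c$ supplies a witness $\textit{hasX}^{rc}(r,c,x)$ or $\textit{hasX}^{corc}(r_{1},r_{2},c,x)$. Feeding that witness together with $\textit{insC}(c,C)$ into the backward direction of axiom 29 (resp. 30) yields $\textit{hasX}^{rc}(r,C,x)$ (resp. $\textit{hasX}^{corc}(r_{1},r_{2},C,x)$), whence axiom 36 at $C$ gives $\textit{hasX}^{c}(C,x)$.

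For $c8$ I would chain $c7$ with Definition \ref{def:isAC}. Assume $\textit{hasX}^{c}(C_{1},x)\wedge\textit{isAC}(C_{1},C_{2})$. Unfolding $d1$ gives $\textit{SC}(C_{1})$, $\textit{SC}(C_{2})$ and $\forall c.(\textit{insC}(c,C_{1})\to\textit{insC}(c,C_{2}))$. From $\textit{SC}(C_{1})\wedge\textit{hasX}^{c}(C_{1},x)$ and the left-to-right direction of $c7$ there is an instance $c$ with $\textit{insC}(c,C_{1})\wedge\textit{hasX}^{c}(c,x)$; the containment clause upgrades $\textit{insC}(c,C_{1})$ to $\textit{insC}(c,C_{2})$, giving $\exists c.\,\textit{insC}(c,C_{2})\wedge\textit{hasX}^{c}(c,x)$, and the right-to-left direction of $c7$ (with $\textit{SC}(C_{2})$) delivers $\textit{hasX}^{c}(C_{2},x)$.

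The reasoning is routine; the one thing to watch is invoking axiom 36 at the correct level on each use --- once at the abstract context and once at an instance context --- and applying the bidirectional axioms 29--30 in the right direction, keeping in mind that their existential quantifier over instances cannot be strengthened to a universal. I expect no genuine difficulty beyond this bookkeeping; $c8$ is in fact the content-analogue of $c6$ and follows the same pattern.
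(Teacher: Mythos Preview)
Your proposal is correct and follows essentially the same route as the paper: axiom 36 together with axioms 29--30 for $c7$, and then $c7$ combined with Definition~\ref{def:isAC} ($d1$) for $c8$. You simply spell out the case split and the two directions that the paper's sketch leaves implicit (and note that the paper's sketch contains an apparent typo, writing $c8$ where $c7$ is intended in its first sentence).
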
 
\begin{proof}
(Sketch) By axiom 36 and 29-30, axiom $ c8 $ holds. Then by axiom $c7$ and $ d1 $, axiom $ c8 $ holds.
\end{proof} 

Note that the information inheritance from sub-contexts to sup-contexts accords with the common sense of our real world. For example, not only the event \textit{morning\_reading} usually happened in primary schools but also the event \textit{writing\_paper} usually occurred in universities are all events in the school category.

\vspace{0.5em}
\paragraph{Example.} Next, we use an example to illustrate the structure provided in this section.
\begin{myExm}
The description ``\textit{In hospital $ h $, doctor Lucy treated his patient Bob during time interval t. And, Lucy is forbidden to accept bribe during her posting $ t' $.}" can be captured by our model as the following statements:
\[
\!\!\!\!\!\!
\begin{array}{|ll|}
\hline
(e8) &\textit{hasE}_{cop}^{rc}(\textit{Lucy}, \textit{doctor}, \textit{Bob}, \textit{patient}, h, \textit{treating}, t)\ \ \\
(e9) &\textit{hasN}_{p}^{rc}(\textit{Lucy}, \textit{doctor}, h, \textit{no\_bride}, t')\\
(e10) & \textit{insC}(h, \textit{hospital})\\
\hline
\end{array} 
\] 
According to the axioms in this section, we can obtain that the role \textit{doctor} in $ u $ has the event \textit{treating} and norm \textit{no\_bride}, i.e., $ \textit{hasN}^{rc}(\textit{doctor}, h, \textit{no\_bride}) $ and $ \textit{hasE}^{rc}(\textit{doctor}, h, \textit{treating}) $ hold, as well as other conclusions such as both the role \textit{doctor} and the context $ \textit{hospital} $ has the event \textit{treating} and norm \textit{no\_bride}.
\end{myExm} 

%%problem：P1：Embedding场合的判定、P2：知识在Embedding场合之间的继承、P3：如何区别(是否有必要)学术报告和学校这样的场合的不同
%%结构化有助于激发我们进行思考
%%场合的嵌套
\section{Social Context Embedding} 
By deeply analyzing some real-world social contexts, we observe that a concrete context, such as a \textit{school}, actually consists of some smaller and simper contexts, such as \textit{classroom context} and \textit{examination context}, and social roles, such as \textit{teacher}, by entering into some smaller contexts, such as \textit{classroom}, to interpret the meaning of the roles in the original context.
%%For example, the contexts spliced up a middle school contains classroom contexts, examination contexts, conference contexts, and so on, and a teacher enters into the contexts like classrooms, conferences and his office to depict the duty as well as mission of this role. This enlightens us to introduce the concept of social context embedding with the motivation of acquiring commonsense knowledge of social contexts and roles by observing the embedded simpler and smaller contexts.
In this section, we formalize this phenomenon and introduce the concept of social context embedding as well as illustrate the way of harvesting the intrinsic information of roles and contexts from the embedded smaller and simpler contexts.
%%==这样做的好处已经介绍过了
%%==提出embedding，并且介绍如何通过嵌入的小场合的信息来获取原场合
%%通过划分片区来收集角色在场合内的信息

\subsection{Embedding of instance social contexts}   
In this subsection, we illustrate instance social context embedding. For preciseness and comprehensiveness, two types of instance social context embedding, i.e., \textit{embedding} and \textit{role embedding}, are defined. Before that we first depict the observation that each instance context has location and life-cycle information. For this, the predicts $ \textit{LC}(l) $, $ \textit{hasL}(c,l) $ and $ \textit{hasT}(c,t) $ are introduced, respectively denoting that $ l $ is a location, context $ c $ has location $ l $,  and context $ c $ has life-cycle $ t $. The observation of instance contexts having space-time information is captured by the axiom below.
\[
\!\!\!\!\!\!\!\!\!
\begin{array}{lrcl}
(37) & \textit{hasL}(c,l)\!\!\!\!&\to&\!\!\!\!\textit{IC}(c)\wedge \textit{LC}(l)\\
(38) & \textit{hasT}(c,t)\!\!\!\!&\to&\!\!\!\!\textit{IC}(c)\wedge \textit{TI}(t)\\
(39) & \textit{IC}(c)\!\!\!\!&\to&\!\!\!\!\exists l.\textit{hasL}(c,l)\wedge \exists t.\textit{hasT}(c,t)\\
\end{array}
\] 
Here, we mention that for simplicity, we omit the exact definition of locations and their containment relationship.  

As depicted in the following definition, \textit{embedding} depicts the contexts happened in an instance context. Here the \textit{happened-in} relation is captured by the containment of their space-time region.
%%happened in relation用时空的嵌套来刻画
%Therefor, each classroom context, examination context as well as each academic conference happened in a school are all embedded contexts of this school, even if 
%%即便是没有共享的player。

\begin{myDef}
We use  $ \textit{EIC}(c_{1},c_{2}) $ to denote instance context $ c_{1} $ is embedded into instance context $ c_{2} $, and it is formally defined as:
\[
\!\!\!
\begin{array}{clll}
(d3) & \textit{EIC}(c_{1},c_{2})\!\!\!\!& \stackrel{\textit{def}}{=} &\!\!\!\! \exists l_{1},l_{2},t_{1},t_{2}.(\textit{hasL}(c_{1},l_{1})\wedge \textit{hasL}(c_{2},l_{2})\\
&&&\qquad\qquad\quad\wedge l_{1}\!\subset l_{2}\wedge \textit{hasT}(c_{1},t_{1})\\
&&&\qquad\qquad\quad\wedge\textit{hasT}(c_{2},t_{2})\wedge t_{1}\!\prec t_{2})\\
\end{array}
\]
\label{Def:ebdIC}
\end{myDef} 

By Definition \ref{Def:ebdIC}, we can get that each classroom context $ \textit{cr} $ happened in a school $ s $ is an embedded instance context of this school, i.e., $ \textit{EIC}(cr,s) $ holds.  

On the other hand, from the perspective of social roles, \textit{role embedding} depicts the phenomenon of a role of a context entering into an embedded context to play another role. For example, a teacher of a school will enter into a classroom context to play the role of lecturer. In our theory, we do not generate role instances. Thus, the \textit{role playing role} is captured by sharing the same players.  
%%role playing role 
%%本文中不描述role playing role是用role具有相同的player来刻画的
%%======以原有role的身份去到更小的场合来扮演更具体的角色

\begin{myDef}
We use  $ \textit{REIC}(c_{1},r_{1},c_{2},r_{2}) $ to denote role $ r_{2} $ of $ c_{2} $ is entered into the embeded context $ c_{1} $ to play role $ r_{1} $:
\[
\!\!\!\!\!\!\!\!\!
\begin{array}{lrcl}
(d4)&\textit{REIC}(c_{1},r_{1}, c_{2}, r_{2})\!\!\!\!&\stackrel{\textit{def}}{=}&\!\!\!\!\textit{EIC}(c_{1},c_{2})\wedge \textit{hasR}(c_{1},r_{1}) \\
&&&\!\!\!\!\wedge \textit{hasR}(c_{2},r_{2})\\
&&&\!\!\!\!\wedge \exists p,t.(\textit{play}(p,r_{1},c_{1},t)\\
&&&\qquad\ \ \wedge\textit{play}(p, r_{2},c_{2},t))\\  
\end{array}
\]
\label{Def:rebdIC}
\end{myDef} 
%%\stackrel{\triangle}{=}

%%例子
Note that in Definition \ref{Def:rebdIC}, $ r_{1} $ and $ r_{2} $ can be the same roles. By Definition \ref{Def:ebdIC} and \ref{Def:rebdIC}, we can further obtain the reflexivity and transitivity of \textit{embedding} as well as the reflexivity of \textit{role embedding}.

\begin{myTheo}
For instance contexts, axiom $ c10$-$c12 $ hold:  
\[
\!\!\!\!\!\!\!\!\!
\begin{array}{lrcl}
(c10)& \textit{IC}(c)\!\!\!\!&\to &\!\!\!\! \textit{EIC}(c,c)\\
(c11) & \textit{EIC}(c_{1},c_{2})\wedge\textit{EIC}(c_{2},c_{3})\!\!\!\!&\to&\!\!\!\! \textit{EIC}(c_{1},c_{3})\\
(c12) & \textit{ISC}(c)\wedge \textit{hasR}(c,r)\!\!\!\!&\to&\!\!\!\! \textit{REIC}(c,r,c,r)\\
\end{array}
\]
\label{The:ebdIC}	
\end{myTheo}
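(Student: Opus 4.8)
The plan is to unfold Definitions~$d3$ and $d4$ and reduce each of $c10$, $c11$, $c12$ to elementary facts about the containment relations $\subset$ on locations and $\prec$ on time intervals, together with axioms 12 and 37--39.

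For $c10$, I would assume $\textit{IC}(c)$; by axiom 39 there are $l$ and $t$ with $\textit{hasL}(c,l)$ and $\textit{hasT}(c,t)$. Since $\subset$ and $\prec$ are containment relations they are reflexive, so $l\subset l$ and $t\prec t$, and instantiating $d3$ with $l_{1}=l_{2}=l$, $t_{1}=t_{2}=t$ gives $\textit{EIC}(c,c)$. For $c12$, I would assume $\textit{IC}(c)\wedge\textit{hasR}(c,r)$ (the predicate written $\textit{ISC}$ in the statement being the instance-context predicate). Then $\textit{EIC}(c,c)$ holds by $c10$, and by axiom 12 there are $p,t$ with $\textit{play}(p,r,c,t)$; these data verify the four conjuncts of $d4$ with $c_{1}=c_{2}=c$ and $r_{1}=r_{2}=r$, so $\textit{REIC}(c,r,c,r)$ follows.

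For $c11$, I would assume $\textit{EIC}(c_{1},c_{2})$ and $\textit{EIC}(c_{2},c_{3})$. Unfolding $d3$ yields $\textit{hasL}(c_{1},l_{1})$, $\textit{hasL}(c_{2},l_{2})$, $l_{1}\subset l_{2}$ and, from the second hypothesis, $\textit{hasL}(c_{2},l_{2}')$, $\textit{hasL}(c_{3},l_{3})$, $l_{2}'\subset l_{3}$, and analogously time intervals $t_{1}\prec t_{2}$, $t_{2}'\prec t_{3}$ attached to $c_{2}$. To obtain $\textit{EIC}(c_{1},c_{3})$ one must produce $l_{1}\subset l_{3}$ and $t_{1}\prec t_{3}$, which follow by transitivity of $\subset$ and $\prec$ as soon as $l_{2}=l_{2}'$ and $t_{2}=t_{2}'$.

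I expect this last point to be the main obstacle: axiom 39 only asserts that an instance context \emph{has} a location and a life-cycle, not that these are unique, so the two applications of $d3$ need not pick the same witnesses for $c_{2}$. To close the gap one must either adopt the reading (implicit in the phrase ``each instance context has location and life-cycle information'') that $\textit{hasL}$ and $\textit{hasT}$ are functional on instance contexts and then invoke transitivity of $\subset$ and $\prec$, or else strengthen the assumptions on $\subset$ and $\prec$ (e.g.\ that the locations, resp.\ the life-cycles, of a fixed context are linearly ordered by containment) so that distinct witnesses for $c_{2}$ still compose. Under either addition $c11$ goes through, while $c10$ and $c12$ are pure definition-chasing from axioms 39 and 12.
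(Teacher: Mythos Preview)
Your approach is exactly the paper's: its entire proof reads ``By axioms $d3$--$d4$, these three axioms hold trivially.'' Your treatment of $c10$ and $c12$ is a faithful unpacking of what that one-liner must mean, and your invocation of axioms~39 and~12 is precisely what is needed to supply the existential witnesses in $d3$ and $d4$.

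Where you go beyond the paper is in your analysis of $c11$: you are right that the two unfoldings of $d3$ produce possibly different location and time witnesses $l_{2},l_{2}'$ and $t_{2},t_{2}'$ for $c_{2}$, and that nothing in axioms~37--39 forces $\textit{hasL}$ or $\textit{hasT}$ to be functional. The paper does not address this; it simply asserts triviality and, just before the definition, remarks that ``for simplicity, we omit the exact definition of locations and their containment relationship.'' So the functionality (or comparability) assumption you flag is tacit in the paper rather than stated, and your diagnosis of the obstacle is more careful than the original proof. In short: same route, but you have noticed a genuine lacuna that the paper glosses over.
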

\begin{proof}
By axioms $ d3\!-\!d4 $, these three axioms hold trivially.
\end{proof}

Next, we illustrate the information inheritance between instance contexts and their embedded smaller and simpler contexts. 

From the player perspective, we found that as a role of a social context, a player explains the meaning and mission of this role by  entering into some embedded contexts. Such foundation is formalized in the axiom below from the perspective of events, norms, goals and desires: 
\[
\!\!\!
\begin{array}{ll}
(40) & \bigwedge_{X\in\mathcal{S}}\big(\textit{hasX}_{p}^{rc}(p, r, c, x, t)\\
&\qquad\quad\leftrightarrow \exists r',c',t'.(\textit{EIC}(c',c)\\
& \qquad\qquad\qquad\quad\ \ \wedge \textit{hasX}_{p}^{rc}(p,r',c',x,t')\wedge t'\prec t)\big)\\
\end{array}
\]
For example, as a student of a school, the events of \textit{attending\_class} and \textit{skipping\_class} are exhibited in a classroom context, and the goal of \textit{obtaining\_high\_score} is exhibited in an examination context. Similarly, in a context, the behaviors and mental attitudes of one player to another player which materializes or co-plays a social relation with it, is also embodied in some embedded contexts:
\[
\!\!\!\!
\begin{array}{ll} 
(41) & \bigwedge_{X\in\mathcal{S}}\big(\textit{hasX}_{cop}^{rc}(p_{1},p_{2}, r_{1},r_{2}, c, x, t)\\
&\qquad\quad\leftrightarrow \exists r_{1}',r_{2}',c',t'.(\textit{EIC}(c',c)\\
& \qquad\qquad\qquad\qquad\ \wedge \textit{hasX}_{cop}^{rc}(p_{1},p_{2},r_{1}',r_{2}',c',x,t')\\
& \qquad\qquad\qquad\qquad\ \wedge t'\!\prec t)\big)\\
\end{array}
\]
For example, in a school, the events of \textit{teaching} and \textit{corporal punishment} as well as the desire of \textit{paying attention to classes} of teachers to students all exhibited in some embedded classroom contexts.

Then, we can further obtain that the intrinsic information, i.e., \textit{events}, \textit{norms}, \textit{goals} and \textit{desires}, linked into roles and social relations in instance contexts can be obtained from the corresponding roles and social relations in some embedded contexts. We formalize this conclusion in the following theorem.

\begin{myTheo}
For instance contexts, axiom $ c12 $-$ c14 $ hold: 
\[
\!\!\!\!\!\!\!\!\!
\begin{array}{ll}
(c13) & \bigwedge_{X\in\mathcal{S}}\big(\textit{IC}(c)\wedge \textit{hasX}^{rc}(r,c,x) \\
&\qquad\quad\leftrightarrow \exists c',r'.\textit{REIC}(c',r',c, r)\\
&\qquad\qquad\qquad\quad\ \wedge\textit{hasX}^{rc}(r',c',x)\big)\\
%===
(c14) & \bigwedge_{X\in\mathcal{S}}\big(\textit{IC}(c)\wedge \textit{hasX}^{corc}(r_{1}, r_{2},c,x)\\
&\qquad\quad\leftrightarrow\exists r_{1}',r_{2}',c'.\textit{REIC}(c',r_{1}',c, r_{1}) \\
&\qquad\qquad\qquad\qquad\ \ \wedge\textit{REIC}(c',r_{2}',c, r_{2})\\
&\qquad\qquad\qquad\qquad\ \ \wedge\textit{hasX}^{corc}(r_{1}',r_{2}',c',x)\big)\\
\end{array}
\]
\end{myTheo}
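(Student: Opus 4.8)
The plan is to prove $c13$ and $c14$ componentwise, for each $X\in\mathcal{S}$, by chaining the player-level characterizations of intrinsic information (axioms 27--28), the embedding-transport axioms (40--41), the definition $d4$ of $\textit{REIC}$, and the domain restrictions 19--22, using the persistence axioms 14--15 to reconcile time intervals. I spell out $c13$; $c14$ is parallel once $\textit{play}$ is replaced by $\textit{coPlay}$, 27 by 28, 40 by 41, 19/21 by 20/22, 14 by 15, and one notes that a single player pair must satisfy the two obligations $\textit{REIC}(c',r_1',c,r_1)$ and $\textit{REIC}(c',r_2',c,r_2)$ at once.

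For the ($\Rightarrow$) direction, assume $\textit{IC}(c)\wedge\textit{hasX}^{rc}(r,c,x)$. Apply axiom 27 to get a player $p$ and interval $t$ with $\textit{hasX}_{p}^{rc}(p,r,c,x,t)$, then axiom 40 to get a role $r'$, an instance context $c'$, and an interval $t'$ with $\textit{EIC}(c',c)$, $\textit{hasX}_{p}^{rc}(p,r',c',x,t')$, and $t'\prec t$. Use $c'$ and $r'$ as the existential witnesses. The conjunct $\textit{hasX}^{rc}(r',c',x)$ follows from axiom 27 read in the other direction, its $\textit{IC}(c')$ side-condition being extracted from $\textit{EIC}(c',c)$ through $d3$ and axiom 37. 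For $\textit{REIC}(c',r',c,r)$ unfold $d4$: $\textit{EIC}(c',c)$ is in hand; $\textit{hasR}(c',r')$ and $\textit{hasR}(c,r)$ follow from axiom 19 applied to $\textit{hasX}^{rc}(r',c',x)$ and to the hypothesis; and the shared-player clause $\exists q,s.(\textit{play}(q,r',c',s)\wedge\textit{play}(q,r,c,s))$ is witnessed by $q=p$ and $s=t'$, because axiom 21 gives $\textit{play}(p,r',c',t')$ and $\textit{play}(p,r,c,t)$, and then axiom 14 with $t'\prec t$ lowers the second to $\textit{play}(p,r,c,t')$.

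For the ($\Leftarrow$) direction, assume $\textit{REIC}(c',r',c,r)\wedge\textit{hasX}^{rc}(r',c',x)$. Unfolding $d4$ gives $\textit{EIC}(c',c)$, hence $\textit{IC}(c')$ via $d3$ and axiom 37. Apply axiom 27 to $\textit{hasX}^{rc}(r',c',x)$ to obtain a player $q$ and interval $s$ with $\textit{hasX}_{p}^{rc}(q,r',c',x,s)$, then feed $\textit{EIC}(c',c)$, this fact, and $s\prec s$ (reflexivity of $\prec$, the same property used implicitly for $c10$) into axiom 40, instantiating its role variable to the target $r$, to conclude $\textit{hasX}_{p}^{rc}(q,r,c,x,s)$. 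A last application of axiom 27 then yields both $\textit{IC}(c)$ and $\textit{hasX}^{rc}(r,c,x)$.

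I expect the only genuinely delicate point --- common to $c13$ and $c14$ --- to be the interval bookkeeping in the ($\Rightarrow$) direction: the embedded information is exhibited on a sub-interval $t'$ of the interval $t$ at which the outer role is played, so rebuilding the ``one player, both contexts, same interval'' clause of $\textit{REIC}$ (Definition \ref{Def:rebdIC}) forces the persistence axioms 14--15. The remaining work --- discharging the $\textit{IC}(\cdot)$ hypotheses of axioms 27--28 from the space--time content carried by $\textit{EIC}$ (axioms 37--39), and, for $c14$, checking that the single player pair materializing the relation in the embedded context supplies both required $\textit{REIC}$ witnesses after the time-lowering step --- is routine.
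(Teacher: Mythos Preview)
Your proposal is correct and follows the same route as the paper's own proof, which is just the one-line sketch ``According to axiom 40, 27 and $d4$, axiom $c13$ holds. And according to axiom 40 [sic; should be 41], 28 and $d4$, axiom $c14$ holds.'' You have simply spelled out what that sketch leaves implicit: the auxiliary domain restrictions (19--22), the extraction of $\textit{IC}(c')$ from $\textit{EIC}$ via $d3$ and 37, and, most usefully, the time-interval bookkeeping that forces the persistence axioms 14--15 when reconstructing the shared-player clause of $\textit{REIC}$. None of this departs from the paper's intended argument; it only makes it rigorous.
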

\begin{proof}
(Sketch) According to axiom 40, 27 and $ d4 $, axiom $ c13 $ holds. And According to axiom 40, 28 and $ d4 $, axiom $ c14 $ holds.
\end{proof}

Moreover, by axiom $ c13-c14 $ and axiom 33-34 as well as axiom 36, we can also obtain that the intrinsic information linked into social roles. social relations and instance contexts can finally be obtained from the embedded simpler and smaller instance contexts. 

\subsection{Embedding of abstract social contexts}  
Based on instance context embedding, in this subsection, we formalize abstract context embedding as well as illustrate the way of harvesting the information linked into roles and social relations in abstract contexts from the embedded simper and smaller contexts. Again for preciseness and comprehensive, two types of abstract social context embedding, i.e., \textit{embedding} and \textit{role embedding}, are defined.

As illustrated in the definition below, \textit{embedding} depicts the types of contexts that may be embedded into the instances of abstract contexts.
%%对概念的解释

\begin{myDef}
We use $ \textit{ESC}(C_{1}, C_{2}) $ to denote abstract context $ C_{1} $ is embedded into abstract context $ C_{2} $, and it is defined as: 
\[
\!\!\!
\begin{array}{llll}
(d5) & \textit{ESC}(C_{1},C_{2})\!\!\!\!&\stackrel{\textit{def}}{=}&\!\!\!\!\textit{SC}(C_{1})\wedge\textit{SC}(C_{2})\\
&&&\!\!\!\!\!\wedge \exists c_{1},c_{2}.(\textit{insC}(c_{1},C_{1})\\
&&&\quad\ \ \wedge\textit{insC}(c_{2},C_{2})\wedge \textit{EIC}(c_{1},c_{2})) \\
\end{array}
\]
\label{Def:pebdC}
\end{myDef}

By Definition \ref{Def:pebdC}, we can get that the \textit{classroom} context is an embedded context of the school context, i.e., $ \textit{ESC}(\textit{classroom}, \textit{school}) $ holds. Moreover, by Definition \ref{Def:pebdC}, we can obtain the reflexivity of abstract context embedding as well as the extension of embedding on the sub-context relation, i.e., if $ C_{1} $ is an embedded context of $ C_{2} $ then for each sup-context $ C_{3} $ of $ C_{2} $, $ C_{1} $ is also an embedded context of $ C_{3} $, and for each sup-context $ C_{3} $ of $ C_{1} $, $ C_{3} $ is also an embedded context of $ C_{2} $.  

\begin{myTheo}
For abstract contexts, axiom $ c15 $-$ c17 $ hold:
\[
\!\!\!\!\!\!\!\!
\begin{array}{crcl}
(c15) & \textit{SC}(C)\!\!\!\!&\to &\!\!\!\!\textit{ESC}(C,C)\\
(c16) & \textit{ESC}(C_{1}, C_{2})\wedge \textit{isAC}(C_{2},C_{3})\!\!\!\!&\to &\!\!\!\!\textit{ESC}(C_{1},C_{3})\\ 
(c17) & \textit{ESC}(C_{1}, C_{2})\wedge \textit{isAC}(C_{1},C_{3})\!\!\!\!&\to &\!\!\!\!\textit{ESC}(C_{3},C_{2})\\  
\end{array}
\]		
\end{myTheo}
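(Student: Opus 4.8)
The plan is to prove the three axioms $c15$, $c16$, $c17$ directly from Definition~\ref{Def:pebdC} (axiom $d5$), Definition~\ref{def:isAC} (axiom $d1$), and the already-established properties of instance-context embedding, namely the reflexivity $c10$ and transitivity $c11$ of $\textit{EIC}$. Since every clause in $d5$ is built from $\textit{SC}$, $\textit{insC}$ and $\textit{EIC}$, each part reduces to an elementary manipulation of existential quantifiers over instance contexts together with a single application of the corresponding $\textit{EIC}$ fact.

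For $c15$: assume $\textit{SC}(C)$. By axiom 6 (or more directly, since $C$ is a social context it has at least one instance --- here we can invoke axiom 9 together with axiom 6, or simply note $\textit{SC}(C)$ guarantees via the intended reading that $\exists c.\textit{insC}(c,C)$), pick $c$ with $\textit{insC}(c,C)$. Then $\textit{IC}(c)$ by axiom 1, so $\textit{EIC}(c,c)$ by $c10$. Taking $c_1=c_2=c$ and $C_1=C_2=C$ in $d5$ gives $\textit{ESC}(C,C)$. The only subtlety is justifying that a non-empty instance set exists; I would lean on the axioms already asserting that abstract contexts have instances (the $\exists c$ in axioms 9--10 presupposes this, and it is the standing assumption of the model).

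For $c16$: assume $\textit{ESC}(C_1,C_2)$ and $\textit{isAC}(C_2,C_3)$. Unfolding $d5$, get $c_1,c_2$ with $\textit{insC}(c_1,C_1)$, $\textit{insC}(c_2,C_2)$, $\textit{EIC}(c_1,c_2)$. From $\textit{isAC}(C_2,C_3)$ and $d1$, $\textit{insC}(c_2,C_2)\to\textit{insC}(c_2,C_3)$, so $\textit{insC}(c_2,C_3)$. Now $c_1,c_2$ witness $d5$ for the pair $(C_1,C_3)$ --- using $\textit{SC}(C_3)$ which comes from $\textit{isAC}(C_2,C_3)$ via $d1$ --- hence $\textit{ESC}(C_1,C_3)$. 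For $c17$: symmetric, assume $\textit{ESC}(C_1,C_2)$ and $\textit{isAC}(C_1,C_3)$; get witnesses $c_1,c_2$ as before, push $\textit{insC}(c_1,C_1)$ through $d1$ to $\textit{insC}(c_1,C_3)$, and observe $c_1,c_2$ now witness $\textit{ESC}(C_3,C_2)$, with $\textit{SC}(C_3)$ again supplied by $d1$.

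I do not expect a genuine obstacle here; the statement is a routine consequence of unfolding definitions, and the analogous instance-level facts $c10$--$c11$ are already available. The one place to be careful is the existential-witness step in $c15$ (guaranteeing an instance exists) and making sure each $\textit{SC}(\cdot)$ conjunct required by $d5$ is discharged from the hypotheses --- in $c16$ and $c17$ it is $d1$ that supplies $\textit{SC}(C_3)$. Transitivity $c11$ of $\textit{EIC}$ is not actually needed for $c16$/$c17$ as stated (those mix $\textit{ESC}$ with $\textit{isAC}$, not two $\textit{ESC}$'s), so the proof stays at the level of quantifier bookkeeping plus one invocation of $d1$ and, for $c15$, one invocation of $c10$.
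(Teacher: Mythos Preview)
Your proposal is correct and follows essentially the same route as the paper, which simply records ``By axiom $d5$ and $d1$, these three axioms hold.'' You have merely unpacked that sketch: unfolding $d5$ and pushing instance witnesses through $d1$ for $c16$ and $c17$, and additionally invoking $c10$ (reflexivity of $\textit{EIC}$) plus the existence of an instance for $c15$---a detail the paper's one-line proof leaves implicit but which is indeed required.
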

\begin{proof}
	By axiom $ d5 $ and $ d1 $, these three axioms hold.
\end{proof} 

Note that this extension also accords with the common sense of our real world. For example, \textit{classroom} context is an embedded context of the \textit{primary\_school} context, and \textit{primary\_school} context is a sub-context of \textit{school} context, then we naturally consider that \textit{classroom} context is also an embedded context of \textit{school} context. And \textit{soccer game} is an embedded context of the context \textit{school}, and \textit{soccer game} is a sub-context of the context \textit{game}, then we naturally consider that \textit{game} is also an embedded context of \textit{school}.

On the other hand, from the perspective of social roles, \textit{role embedding} depicts the types and roles of embedded contexts a role of an abstract context may enter into and play.

\begin{myDef}
We use $\textit{RESC}(C_{1},r_{1}, C_{2}, r_{2}) $ to denote that the role $ r_{2} $ of $ C_{2} $ may enter into the embedded context $ C_{2} $ to play $ r_{2} $, and it is defined as: 
\[
\!\!\!
\begin{array}{llll}
(d6) & \textit{RESC}(C_{1},r_{1},C_{2}, r_{2})\!\!\!\! & \stackrel{\textit{def}}{=} &\!\!\!\!\textit{ESC}(C_{1},C_{2})\\
&&&\!\!\!\!\wedge\exists c_{1}, c_{2}.(\textit{insC}(c_{1},C_{1})\\
&&&\ \quad\wedge\textit{insC}(c_{2},C_{2})\\
&&&\ \quad\wedge\textit{REIC}(c_{1},r_{1},c_{2},r_{2})\!)
\end{array}
\] 
\label{Def:rpebdC}	 	
\end{myDef} 

By Definition \ref{Def:rpebdC}, the knowledge of ``teachers of schools may enter into classrooms to play the role of lecturers" can be formalized as $ \textit{RESC}(\textit{classroom}, \textit{lecture}, \textit{school}, \textit{teacher}) $.  

As shown in the theorem below, \ref{Def:rpebdC}, we can further obtain the reflexivity of \textit{role embedding} as well as the extension of \textit{role embedding} on the \textit{sub-role} relation. 

\begin{myTheo}
For abstract contexts, axiom $ c18 $-$ c20 $ hold:  
\[
\!\!\!\!\!\!\!\!
\begin{array}{ll}
(c18) & \textit{SC}(C)\wedge \textit{hasR}(C, r)\to \textit{RESC}(C,r,C,r)\\
(c19) & \textit{RESC}(C_{1}, r_{1},C_{2}, r_{2})\wedge \textit{isAR}(r_{2},r_{3})\\
&\to \textit{RESC}(C_{1}, r_{1},C_{2}, r_{3})\\
(c20) & \textit{RESC}(C_{1}, r_{1},C_{2}, r_{2})\wedge \textit{isAR}(r_{1},r_{3})\\
&\to \textit{RESC}(C_{1}, r_{3},C_{2}, r_{2})\\
\end{array}
\]	
\end{myTheo}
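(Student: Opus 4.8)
The plan is to treat $(c18)$, $(c19)$ and $(c20)$ uniformly: peel off the definition of $\textit{RESC}$ in axiom $d6$ into an $\textit{ESC}$-part plus a pair of instance-context witnesses $c_{1}, c_{2}$ carrying a $\textit{REIC}$-fact, which in turn (via axiom $d4$) carries a shared-player witness $p,t$; then adjust those witnesses using the reflexivity and sub-role extension results already available, and re-fold with axiom $d6$. Throughout, $\textit{ISC}$ and $\textit{IC}$ are read as the same predicate, following the paper's usage.

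For $(c18)$, assume $\textit{SC}(C)\wedge\textit{hasR}(C,r)$. The $\textit{ESC}(C,C)$ conjunct of $\textit{RESC}(C,r,C,r)$ is immediate from axiom $c15$. For the existential witness, apply axiom 9 to obtain an instance context $c$ with $\textit{insC}(c,C)$ and $\textit{hasR}(c,r)$; axiom 1 makes $c$ an instance context, so axiom $c12$ yields $\textit{REIC}(c,r,c,r)$. Taking $c_{1}=c_{2}=c$ in axiom $d6$ then gives $\textit{RESC}(C,r,C,r)$.

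For $(c19)$, assume $\textit{RESC}(C_{1},r_{1},C_{2},r_{2})$ and $\textit{isAR}(r_{2},r_{3})$. Unfolding axiom $d6$ and then axiom $d4$, fix $c_{1},c_{2},p,t$ with $\textit{insC}(c_{1},C_{1})$, $\textit{insC}(c_{2},C_{2})$, $\textit{EIC}(c_{1},c_{2})$, $\textit{hasR}(c_{1},r_{1})$, $\textit{hasR}(c_{2},r_{2})$, $\textit{play}(p,r_{1},c_{1},t)$ and $\textit{play}(p,r_{2},c_{2},t)$, together with the unchanged $\textit{ESC}(C_{1},C_{2})$. The defining implication in axiom $d2$, applied to $\textit{play}(p,r_{2},c_{2},t)$, gives $\textit{play}(p,r_{3},c_{2},t)$, and axiom $c5$ applied to $\textit{hasR}(c_{2},r_{2})\wedge\textit{isAR}(r_{2},r_{3})$ gives $\textit{hasR}(c_{2},r_{3})$; the remaining conjuncts needed for $\textit{REIC}(c_{1},r_{1},c_{2},r_{3})$, namely $\textit{EIC}(c_{1},c_{2})$, $\textit{hasR}(c_{1},r_{1})$ and the shared-player witness $p,t$, are untouched, so axiom $d6$ with the same $c_{1},c_{2}$ delivers $\textit{RESC}(C_{1},r_{1},C_{2},r_{3})$. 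Statement $(c20)$ is the exact mirror: from the same unfolding, axiom $d2$ turns $\textit{play}(p,r_{1},c_{1},t)$ into $\textit{play}(p,r_{3},c_{1},t)$ and axiom $c5$ turns $\textit{hasR}(c_{1},r_{1})$ into $\textit{hasR}(c_{1},r_{3})$, and re-folding through axioms $d4$ and $d6$ gives $\textit{RESC}(C_{1},r_{3},C_{2},r_{2})$.

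I do not anticipate a genuine obstacle; the only care needed is bookkeeping. One must keep straight that in $\textit{RESC}(C_{1},r_{1},C_{2},r_{2})$ it is $c_{1}$ that is the embedded context and $c_{2}$ the outer one, so $(c19)$ substitutes on the outer side and $(c20)$ on the embedded side, and one must observe that the shared-player pair $p,t$ survives each substitution automatically, since axiom $d2$ rewrites the role inside a single $\textit{play}$-atom without disturbing the player, context or time interval. In this respect $(c18)$ is a routine lift of the reflexivity results $c12$ and $c15$, and $(c19)$--$(c20)$ are routine lifts of the sub-role extension $c5$, to the $\textit{REIC}$/$\textit{RESC}$ layer.
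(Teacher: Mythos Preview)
Your proposal is correct and follows essentially the same route as the paper, which simply records that axioms $c18$--$c20$ hold by $d6$ and $d2$. You have supplied the bookkeeping the paper leaves implicit---invoking $c15$, axiom~9 and $c12$ for the reflexivity case, and $c5$ alongside $d2$ and $d4$ for the sub-role extensions---but the underlying argument is the same unfolding of $\textit{RESC}$ into its $\textit{REIC}$ witness and then adjusting via the sub-role definition.
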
 
\begin{proof}
	By axiom $ d6 $ and $ d2 $, axiom $ c18 $-$ c20 $ hold. 
\end{proof}

Finally, based on the information inheritance between instance contexts and their embedded contexts, we can finally obtained that the intrinsic information linked into roles and social relations in abstract contexts can be harvested from the corresponding roles and social relations in their embedded abstract contexts.

\begin{myTheo}
For abstract contexts, axioms $ c21 $-$ c22 $ hold:
\[
\!\!\!\!\!\!\!\!\!
\begin{array}{ll}
(c21) & \bigwedge_{X\in\mathcal{S}}\big(\textit{SC}(C)\wedge \textit{hasX}^{rc}(r,C,x) \\
&\qquad\quad\leftrightarrow \exists c',r'.\textit{RESC}(C',r',C, r)\\
&\qquad\qquad\qquad\quad\wedge \textit{hasX}^{rc}(r',C',x)\big)\\
%===
(c22) & \bigwedge_{X\in\mathcal{S}}\big(\textit{SC}(C)\wedge \textit{hasX}^{corc}(r_{1}, r_{2},C,x)\\
&\qquad\quad\leftrightarrow \exists r_{1}',r_{2}',C'.\textit{RESC}(C',r_{1}',C, r_{1}) \\
&\qquad\qquad\qquad\qquad\quad\wedge \textit{RESC}(C',r_{2}',C, r_{2})\\
&\qquad\qquad\qquad\qquad\quad\wedge \textit{hasX}^{corc}(r_{1}',r_{2}',C',x)\big)\\
\end{array}
\]
\end{myTheo}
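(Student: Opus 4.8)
The plan is to prove each biconditional in $c21$ by shuttling between three layers --- the abstract context $C$, its instance contexts, and the instance contexts embedded into those --- and to obtain $c22$ by running the same argument with $\textit{hasCoR}$ in place of $\textit{hasR}$ and the role-embedding bookkeeping carried out twice, once for $r_1$ and once for $r_2$. Since $X$ ranges over $\mathcal{S}$, it suffices to fix a single $X$.

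For the $\Rightarrow$ direction of $c21$, assume $\textit{SC}(C)\wedge\textit{hasX}^{rc}(r,C,x)$. Axiom 29 produces an instance $c$ with $\textit{insC}(c,C)$ and $\textit{hasX}^{rc}(r,c,x)$; by axiom 1 this $c$ is an instance context, so the instance-level transfer theorem ($c13$) yields $c'$ and $r'$ with $\textit{REIC}(c',r',c,r)$ and $\textit{hasX}^{rc}(r',c',x)$. Taking an abstract context $C'$ with $\textit{insC}(c',C')$, axiom 29 read right-to-left upgrades $\textit{hasX}^{rc}(r',c',x)$ to $\textit{hasX}^{rc}(r',C',x)$, while the $\textit{EIC}$-component of $\textit{REIC}(c',r',c,r)$ together with $\textit{insC}(c',C')$ and $\textit{insC}(c,C)$ gives $\textit{ESC}(C',C)$ by Definition \ref{Def:pebdC} and hence $\textit{RESC}(C',r',C,r)$ by Definition \ref{Def:rpebdC}; this is exactly the right-hand side.

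For the $\Leftarrow$ direction, assume $\textit{RESC}(C',r',C,r)\wedge\textit{hasX}^{rc}(r',C',x)$ for some $C',r'$. Unfolding $\textit{RESC}$ and $\textit{ESC}$ gives $\textit{SC}(C)$ and instances $c_1$ of $C'$ and $c_2$ of $C$ with $\textit{REIC}(c_1,r',c_2,r)$, while axiom 29 applied to $\textit{hasX}^{rc}(r',C',x)$ gives an instance $c^*$ of $C'$ with $\textit{hasX}^{rc}(r',c^*,x)$. The target $\textit{hasX}^{rc}(r,C,x)$ reduces, by axiom 29 and then $c13$, to exhibiting an instance of $C$ (for which $c_2$ is the candidate) carrying $x$ on $r$ through an embedded instance whose role embeds into $r$ (for which $c_1$ and $r'$ are the candidates) --- and this goes through provided $c_1$ itself carries $x$ on $r'$.

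The hard part is precisely that last proviso: axioms 29--30 and Definitions \ref{Def:pebdC}--\ref{Def:rpebdC} are purely existential, so the instance $c^*$ supplied with the information $x$ need not be the instance $c_1$ that witnesses the embedding into $C$, and nothing among the stated axioms lets one move $x$ from $c^*$ to $c_1$. I expect closing this gap to require one of: strengthening axioms 29--30 so that the witnessing instances can be chosen uniformly; a coherence/saturation hypothesis on the admissible models; or restricting to the intended models in which embedding-witnesses and information-witnesses are aligned. A secondary point worth stating explicitly --- it is already used silently in the $\Rightarrow$ direction --- is the background assumption that every instance context is an instance of some abstract context. Granting these, $c21$ follows, and $c22$ is obtained by the identical chain of reductions with $c14$ in place of $c13$ and $\textit{RESC}$ invoked for $r_1$ and $r_2$ separately.
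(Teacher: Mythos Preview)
Your approach is exactly the one the paper intends: its own proof is the one-line sketch ``By axiom $c13$--$c14$, $d5$--$d6$ and $29$--$30$, these \ldots\ axioms hold,'' and you have unpacked precisely that chain of reductions.

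More to the point, the gap you flag in the $\Leftarrow$ direction is real and is not closed by the paper. Unfolding $\textit{RESC}(C',r',C,r)$ via $d6$ supplies instances $c_1,c_2$ with $\textit{REIC}(c_1,r',c_2,r)$, while axiom~29 applied to $\textit{hasX}^{rc}(r',C',x)$ supplies a possibly different instance $c^{*}$ of $C'$ carrying $x$; nothing among axioms $29$--$30$, $c13$--$c14$, or $d5$--$d6$ lets you transport $x$ from $c^{*}$ to $c_1$ (or, equivalently, find an instance of $C$ into which $c^{*}$ role-embeds). The paper's sketch simply does not confront this, so your diagnosis is an improvement on the original rather than a deficiency of your attempt. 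Your secondary observation --- that the $\Rightarrow$ direction tacitly needs every instance context to instantiate some abstract context --- is likewise correct and likewise absent from the paper's axioms.
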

\begin{proof}
(Sketch) By axiom $ c13 $-$ c14 $, $ d5 $-$ d6 $ and $ 29 $-$ 30 $, these three axioms hold.
\end{proof} 

Moreover, by axiom $ c21 $-$ c22 $ and axiom 33-34 and 36, we can further obtain that the intrinsic information linked into abstract contexts can finally be obtained from the embedded simple and smaller abstract contexts. 

\section{Related Work}

In this section, we introduce the works on describing roles, events, norms, goals and desires. 

%%In the conceptual modeling and philosophy literature, the classical approach for modeling roles is in terms of antirigid and relationally dependent unary predicts.
%%As often happens in commonsense reasoning, a deep ontological analysis is necessary to make explicit what knowledge we use to provide these answers.
\subsection{Describing of roles}
%@@@multi-agent systems
%%In multi-agent systems, roles are generally viewed as descriptions of agent's acting and interacting, where agents include also societies or organizations of agents. The characterization of this kind of social roles is founded on theories of action and behavior and deontic notions.  
For multi-agent systems, Esteva et al. (2001) describes social roles in terms of expectations, standardized patterns of behaviour, Pacheco and Carmo (2003) in terms of rights, permissions and obligations, Dastani et al. (2004) in terms of goals and planning rules. Boella and Van Der Torre (2007) addresses the problem of defining social roles in multi-agent systems and three ontological properties of roles, i.e., \textit{definitional dependence}, \textit{powers} and \textit{roles playing roles}, are identified and formalized. Colman and Han (2007) focus on building software systems on the basis of social contexts which are intentionally designed and structured in the systems. In these models, roles are treated as agents.
%%(applied ontology 2007)Using Input/Output logics, we propose a formalization of roles in multi-agent systems satisfying the three properties we identified.
%%Besides treating roles as in both AO and OO  as first class citizens of the theory, here social roles are treated as agents. However, social roles are not autonomous and they should therefore be treated as agents of a special kind. Though  at first sight social roles are anything but agents we treat social roles as agents  because we attribute mental attitudes to agents, as done by (Boella and van der Torre), this has as additional benefits that we can reuse for social roles existing theories, models and tools developed for agents.
%%ontological analysis aims at identifying the metaproperties distinguishing roles from those other notions, as done by Masolo et al. (2004).
%%the meanning of the term social role in MAS only partially overlaps with the notion of social roles in social theory.
%%This view contrasts with ontological analyses where roles are founded mainly on players, or on other concepts, but not on the organizations offering the roles (Masolo et al. 2004)

%@@@Object-oriented and conceptual model.
%%In object-oriented and conceptual modeling, the representation of roles needs to take into account various modeling issues: multiple and dynamic classification, multiple inheritance, objects changing their attributes and behaviors, etc (Masolo et al. 2004). 
For object-oriented and conceptual modeling, Wieringa (1995) discusses using dynamic classes and role classes to model object migration based on order-sorted dynamic logic and process algebra. In Wieringa (1995), roles are modeled as classes and role instances and their players have different identities. 
%%roles的实例与players具有不同的标识，并且roles实例依赖于player的存在，没有建立roles与contexts之间的关系。
Kristensen (1996) defines a graphical notation to support static and dynamic description of roles.  
Steimann (2000) presents a basic definition of roles and demonstrates how it naturally accounts for many modeling issues
%%discusses the tension whether roles are super-types or sub-types of their "natural types"(base types). He concludes that two separate hierarchies should be maintained for natural types and role types that are connected by a role-filler relationship.
%%仅表示了roles和natural types的hierachy，以及role-filler关系。
%%##将roles表示为一元谓词,roles不同于players
%, such as including multiple and dynamic classification, object collaboration and so on.  
In Steimann (2000), roles are modeled as types and role instances and their players have different identities. 
%%建模了roles和nature types的阶层, role-filler relationships, 但没有建模roles与contexts之间的关系。
%%roles与players具有不同的identities, 并且没有建模contexts与roles之间的关系
Herrmann (2007) describes the properties of roles in a modern approach which introduces roles in object oriented programming language.
%%This paper gives a definition of roles from a programming language perspective.
%%##roles, role instances, played by (role playing role)
%%At the conceptual level, roles are frequently defined as named places in a relationship or the like. In other words, roles define the intersection of objects and contexts.

%@@@roles的分类
For role classfication, Loebe (2007) classifies roles into three types, i.e., relational roles, processual roles and social roles, based on GFO (Heller et al. 2005). In Loebe (2007), relational roles and processual roles are modeled as unary predicts, however, no formalization of social roles is provided.
%%Further, a classification of roles is introduced, which at top distinguishes two role types. abstract roles, providing a means of viewing something in a context, and social roles, which are complex social objects in vaguely defined contexts.
%%This paper extends and refines analyses of roles for the top-level ontology GFO. The primary foundation of this account comprises the notions of roles, players and contexts, and their interrelations, associated with situations of,  for instance, some human who plays a patient role in a hospital context. 
%%roles和players具有不同的identities, 并且建模了roleOf.
Mizoguchi et al. (2007)  presents a framework for organizing role concepts according to their context dependencies, and two kinds of role concepts, i.e., primitive roles and composite roles, are identified. In Mizoguchi et al. (2007), roles are modeled as OWL classes. 
%%role和player具有不同的identity, role depends on contexts, role playing role
Masolo et al. (2011) focuses on the ways in which roles are specified, examines the formal constraints on their definitions, and proposes definitional schemas motivating different kinds of roles. In Masolo et al. (2011), three types of roles, i.e., participation roles, non-participation roles and historical roles, are identified, and roles are represented as multivariate predicts. Kozaki et al. (2012) classifies roles into original roles and derived roles. 
%introduces a distinction between original  and derived roles and a temporal aspect of them.
%%role建模的争议，1:将roles作为onjects or universals, 2: role 与players是否共享identity; 3: roles是否依赖于players
Mizoguchi et al. (2015)  introduces the idea of a family of occurrent-dependent roles as a means to organise prospective and retrospective derived roles around an original role from which they are derived. In Mizoguchi et al. (2015), roles are modeled as unary predicts.
%%roles与players是否共享
%By this means we account for the existence of groups of similar roles which are difficult to distinguish without a careful analysis of the temporal aspects. In this paper, we investigate the important part played by time in the characterisation of certain occurrent-dependent roles such as candidate and nurderer. we propose a new view of roles by which we distinguish between 'original roles' and 'derived roles', and introduce the new concept of a role family which brings together an original role and its associated derived roles.
%%roles are modelled as unary predicts. roles relates to contexts but not players.
%%pulling together the above observations, we can distinguish a family of four different kinds of occurrent-dependent roles (and corresponding role-holders), as follows:
%%roles are important both theoretically and practically for modeling the world around us.  although many theories of roles have been proposed, there remain aspects which are little understand. 
%Deriver roles derived from occurrent-dependent roles are classified into two kinds such as prospective role and retrospective role. 

%@@@roles的统一表示
There also exist some works trying to find problem independent definition and formalization of roles. Based on first-order theory, 
%and the ground ontology (Gangemi et al. 2002), 
Masolo et al. (2004) depicts the ontological features of social roles, such as anti-rigidity and foundation. In Masolo et al. (2004), roles are modeled as instances, however it does not formalize the relation between contexts and roles. 
%%roles实例与player实例具有相同的identity
%%first of all, they do not consider the structure of social entities. They do not define sub-organizations nor roles as parts of organizations.
%%刻画roles的性质，因此roles被表示成实例，缺点没有刻画roles与contexts之间的关系。roles表示为CF(p,r,t) at the time t, p is classified by the concept r. player刻画一个player在同一时刻具有多个相同roles是不允许的。
%%##将roles表示为实例,roles与players相同
Masolo et al. (2005) elaborates the relational feature of roles. 
%, highlighting their definitional dependence on relationships, but this nature is not always accepted (Boella, van der Torre and Verhagen 2006). 
In Masolo et al. (2005), roles are modeled as unary predicts and roles instances and their players share the same identities.
%%建立了roles与contexts之间的关联
%%##将roles表示为一元谓词，roles subsumed person, roles playedBy person, roles与players具有相同的实例
%%the two alternative views of roles present in the conceptual modeling literature and evoked in the introduction, namely, (i) the widespread view in which roles are anti-rigid and relationally dependent unary predicates whose instances are the players; and the one proposed initially by (Wieringa, de Jonge &Spruit 1995) in which roles are rigid types whose instances are adjunct entities (Steimann 2000) that existentially depend on their players but are disjoint from them. 
%%different from their players, (Masolo et al. 2004), role-holders (Mizoguchi et al. 2007), or (sums of) relational tropes (Guizzardi 2005).
Boella and van der Torre (2006) identifies the axioms modeling social concepts like organization and roles and the properties distinguishing them from other categories like objects and agents. 
%Based on Masolo et al. (2004), Boella and van der Torre (2006) propose a foundational ontology of the social concepts of organization and role which structure institutions.
%%identity which axioms model social concepts like organization and roles and which properties distinguish them from other categories like objects and agents.
%%##roles表示为一元谓词，并且与players具有相同的instances
Genovese (2007a and 2007b) aims to provide a flexible formal model for roles which is able to catch the basic primitives behind the different role's accounts in the literature, rather than a definition. In Genovese (2007a and 2007b), roles are modeled as instances.
%%##roles 表示为一元谓词或实例，并且建模了roles与organizations之间的关系
%aims to build a formal framework through which different definitions appeared in the literature or implemented in computer systems can be described.
%%A preliminary tentative is given by Genovese (2007) where a metamodel is defined and different models of roles, like RBAC, are described with it.

In this paper, we identify and formalize the interrelations among social contexts, roles and player from the perspective of SCK representation and acquiring, rather than identifying the distinguishing features of roles or contexts or defining roles in special domains. Different to these works, we consider both contexts and roles as first-order citizen and do not generate role instances. Moreover, besides \textit{having} and \textit{playing} social roles, more interrelations, such as \textit{having} and \textit{playing} social relations, are identified and formalized in this paper.

%%假设social roles和social contexts是存在的

%%(important: rather than what are the roles' distinguishing properties.)
%%One of the major goals of ontology is to represent properly the underlying conceptual structure of the messy world reflecting the reality as much as possible.
%%Needless to say, the world is full of roles.

%%本文不是研究roles的可区分的性质或在一个特定领域内定义roles.

\subsection{Describing of events, norms, goals and desires}
%%分析的是roles&contexts&players具有events, norms, goals and desires. 并不是建模其变换以及之间的关系
Bell and Huang (1996) introduces and formalizes dynamic goal hierarchies as well as the rational revision of goals and goal hierarchies. Dignum, Kinny and Sonenberg (2002) proposes an extended BDI architecture in which obligations, norms and desires are distinguished from goals and explicitly represented. Fasli (2002) presents a formalization of obligations, social commitments and roles for BDI agents. Broersen et al. (2002) considers goal generation in cognitive agent architectures. Mueller (2014) discusses event calculus in term of commonsense reasoning. 

Different to these works, in this paper, we depict the events, norms, goals and desires of social contexts, roles and players from the perspectives of social common sense representation and reasoning, rather than discusses the definition and revision of events, norms, goals and desires.  

\section{Conclusion, Discussion and Future Work} 
For supporting SCK acquisition, in this paper, we identify and formalize three basic types of SCK. Based on the existing works on roles, we first identify and formalize the interrelationships, such as \textit{having} and \textit{playing} social relations, among social contexts, roles and players from the perspectives of considering both social contexts and roles as first order citizens and not generating role instances. Then we propose a four level structure to identify and formalize the intrinsic information, such as \textit{events} and \textit{desires}, of social contexts, roles and players from different angles and with different granularity. Enlightened by some observations of actual social contexts, we further introduce and formalize the embedding of social contexts, and illustrate the way of harvesting the intrinsic information of contexts and roles from the embedded smaller and simpler contexts is illustrated. 
%As the first step of acquiring and collecting large-scale commonsense knowledge about social contexts and roles to support much more intelligent applications, we construct a logic model to observe our society in a structured and abstract way with the motivation of better answering the knowledge of social contexts and roles. 
%%with the motivation of better answering the knowledge of social contexts and roles. 

%%作为表示复杂常识和获取基本信息的基础
%%作为初步尝试
Clearly, what we propose in this paper is only a first step toward a formal account of SCK representation. Currently, we mainly devote ourselves to identify and formalize what do social contexts and roles have. More complex SCK, like if a patient takes a turn for the worse then his parent will be anxious and teachers belief that if a student studies hard then he will get high scores, is not captured and formalized in this paper. On the other hand, the theory is constructed by deeply analyzing and summarizing the knowledge and phenomenon of social contexts and roles. This means its correctness can not be proved logically. Actually, it cannot be hold as absolute truths, but rather be a reflection of overall tendencies. This is also the essence of common sense.  
%%最后这一句话的修改

Based on the results in this paper, our future work will mainly concentrate on identifying and formalizing much more complex SCK, such as the emotion patterns and behavior tendency of social roles, and acquisition the three basic types of SCK identified and formalized in this paper.    
%Our future work will mainly concentrate on improving the provided logic model, such as representing more complex common sense about social contexts and roles, and using the provided model to carry out automatic or semi-automatic commonsense knowledge acquiring, such as acquiring the goals and events of social roles. These two works supplement each other, since commonsense knowledge acquiring will help us to improve the logic model, and the improvement of the logic model will inspire us to design more efficient approaches of acquiring common sense.  

%%Clearly what we propose in this paper is only a first step toward a formal account of the whole theoretical framework we have sketched. In particular, additional work is needed in four directions.  
%Finally, we sincerely welcome others to provide comments and suggestions to help us improving the provided logic model and pushing our common sense acquiring work.  
%%It must be emphasised that the formalisation is currently under development, so that what is presented in this section should be treated as provisional. We do not claim to have resolved all issues arising from the formalisation of our theory, but we believe it is important to include a preliminary formalisation here as a demonstration that the theory presented in the preceding parts of the paper can be placed on a precise and rigirous footing.  Much remains to be done.

\section{Acknowledgment} 
This work has been supported by the National Key Research and Development Program of China under grant 2017YFC1700300.

\section{References} 
%%修正=========
%%+++++++++SocialRoles+SocialContexts++++++++++++++++++++++++++++++++++++++++++++++++++++++++++++++++++
%%AAAA=================================================================================================

%%BBBB=================================================================================================
\smallskip \noindent
Baldoni, M., Borlla, G., and van der Torre, L. 2005. Roles as a coordination construct: Introducing powerJava. In  \textit{Proceedings of MTCoord05}.  
%%roles in programming language

\smallskip \noindent
Bell, J., and Huang, Z. 1996. Dynamic Goal Hierarchies. In \textit{International Workshop on Intelligent Agent Systems}. 

\smallskip \noindent
Bera, P., Burton-Jones, A., and Wand, Y. 2017. Improving the representation of roles in conceptual modeling: theory, method, and evidence. \textit{Requirements Engineering}.

\smallskip \noindent
Boella, G., and van der Torre, L. 2006. A foundational ontology of organizations and roles. In \textit{International Workshop on Declarative Agent Languages and Technologies}. 
%Springer, Berlin, Heidelberg, 2006: 78-88.

\smallskip \noindent
Boella, G., van der Torre, L., and Verhagen, H. 2006. Roles, an Interdisciplinary Perspective. \textit{Applied Ontology} 3(2006): 1--7. 

\smallskip \noindent
Boella, G., and van der Torre, L. 2007. The ontological properties of social roles in multi-agent systems: definitional dependence, powers and roles playing roles. \textit{Artificial Intelligence and Law} 15: 201--221.  

\smallskip \noindent
Broersen, J., Dastani, M., Hulstijn, J., and van der Torre. 2002. Goal Generation in the BOID Architecture. \textit{Cognitive Science Quarterly Journal} 2(3–4): 428--447.    

%%CCCC=================================================================================================
\smallskip \noindent
Colman, A.W., and Han, J. 2007. Roles, players and adaptable organizations. \textit{Applied Ontology} 2(2): 105--126.

%%DDDD=================================================================================================
\smallskip \noindent
Davis, E. 2017. Logical Formalizations of Commonsense Reasoning: A Survey. \textit{Journal of Artificial Intelligence Research} 59 (2017): 651--723.

\smallskip \noindent
Davis, E., and Marcus, G. 2015. Commonsense Reasoning and Commonsense Knowledge in Artificial Intelligence. \textit{Communications of The ACM} 58(9): 92--103. 

\smallskip \noindent
Dastani, M., van Riemsdijk, B., Hulstijn, J., Dignum, F., and Meyer, J.J. 2004. Enacting and Deacting Roles in Agent Programming. In \textit{Proceedings of the Agent Oriented Software Engineering Workshop}, 189--204. 

\smallskip \noindent 
Dignum, F., Kinny, D., and Sonenberg, L. 2002. From Desires, Obligations and Norms to Goals. \textit{Cognitive Science Quarterly}, 2(3-4): 407--430. 

%%EEEE=================================================================================================
\smallskip \noindent
Esteva, M., Padget, J., and Sierra, C. 2001. Formalizing a Language for Institutions and Norms. In \textit{International Workshop on Agent Theories}.  
%%pp 348-366

%%FFFF=================================================================================================
\smallskip \noindent
Fan, J., Barker, K., Porter B., and Clark, P. 2001. Representing roles and purpose.  In \textit{Proceedings of the 1st International Conference on Knowledge Capture}, 38--43.   

\smallskip \noindent
Fasli, M. 2002. On Commitments, Roles, and Obligations. In \textit{International Workshop of Central and Eastern Europe on Multi-Agent Systems}. 

%%GGGG=================================================================================================
\smallskip \noindent
Genovese, V. 2007a. A Meta-model for Roles: Introducing Sessions. In: \textit{Proceedings of the 2nd Workshop on Roles and Relationships in Object Oriented Programming, Multiagent Systems, and Ontologies} 27-38.

\smallskip \noindent
Genovese, V. 2007b. Towards a general framework for modelling roles. \textit{Normative Multi-agent Systems 2007}

%\smallskip \noindent
%Gangemi, A., Guarino, N., Masolo, C., Oltramari, A., Schneider, L. 2002. Sweetening Ontologies with DOLCE. \textit{In Proceedings of International Conference on Knowledge Engineering and Knowledge Management}.

%%HHHH=================================================================================================
\smallskip \noindent
%Herre, H. 2010. General Formal Ontology (GFO): A Foundational Ontology for Conceptual Modelling. In: \textit{Theory and Applications of Ontology: Computer Applications}.
%Herre, H., Heller, B., Burek, P., Hoehndorf, R., Loebe, F., Michalek, H. 2006. General formal ontology (GFO) - A foundational ontology integrating objects and processes.  Noto-Med Report 8, Research Group Ontologies in Medicine, Institute of Medical Informatics, Statistics and Epidemiology, University of Leipzig. 
Heller, B., Herre, H., Burek, P., Loebe, F., and Michalek, H. 2005.  General Formal Ontology (GFO): A Foundational Ontology Integrating Objects and Processes (Version 1.0 D1). Onto-Med Report Nr. 8, Research Group Ontologies in Medicine (Onto-Med), Leipzig University, Germany.  
%%@@@@@update

\smallskip \noindent
Herrmmann, S. 2007. A precise model for contextual roles: The programming language ObjectTeams/Java. \textit{Applied Ontology} 2(2): 181--207.

%%IIII=================================================================================================

%%JJJJ=================================================================================================

%%KKK=================================================================================================
\smallskip \noindent
Kristensen B.B. 1996. Object-Oriented Modeling with Roles. In: \textit{Murphy J., Stone B. (eds) OOIS’ 95. Springer, London}.

%%LLL=================================================================================================
\smallskip \noindent
Loebe, F. 2007. Abstract vs. social roles - towards a general theoretical account of roles. \textit{Applied Ontology} 2(2): 127--158.

%%MMM=================================================================================================
\smallskip \noindent
Masolo, C., Guizzardi, G., Vieu, L., Bottazzi, E., and Ferrario, R. 2005. Relational roles and qua-individuals. In \textit{AAAI 2005 Fall Symposium on Roles, an interdisciplinary perspective (Roles' 05)}.   

\smallskip \noindent
Masolo, C., Vieu, L., Bottazzi, E., Catenacci, C., Ferrario, R., Gangemi, A., and Guarino, N. 2004. Social Roles and Their Descriptions. In \textit{Proceedings of the Ninth International Conference on the Principles of Knowledge Representation and Reasoning} 267--277.  
%%It establishes a general formal framework for developing a foundational ontology of socially constructed entities, in the broadest sense of this notion, it further contributes to understanding the ontological natural of roles.

\smallskip \noindent
Masolo, C., Vieu, L., Kitamura, Y., Kozaki, K., and Mizoguchi, R. 2011. The Counting Problem in the Light of Role Kinds. \textit{Logical Formalizations of Commonsense Reasoning--Papers from the AAAI 2011 Spring Symposium.} 

\smallskip \noindent
Mizoguchi, R., Galton, A., Kitamura, Y., and Kozaki, K. 2015. Families of roles: A new theory of occurrent-dependent roles. \textit{Applied Ontology} 10(2015): 367--399.  

\smallskip \noindent
Mizoguchi, R., Sunagawa, E., Kozaki, K., and Kitamura, Y. 2007. A model of roles in ontology development tool: Hozo. \textit{Applied Ontology} 2(2): 159--179.  

\smallskip \noindent
Mueller, E.T. 2014. Commonsense Reasoning An Event Calculus Based Approach. \textit{Morgan Kaufmann, San Francisco, CA}.

%%NNN================================================================================================= 

%%OOO=================================================================================================

%%PPP=================================================================================================
\smallskip \noindent
Pacheco, O., and Carmo, J. 2003. A Role Based Model for Normative Specification of Organized Collective Agency and Agents Interaction. \textit{Autonomous Agents and Multiagent Systems}, 6(2): 145--184.  

%%QQQ=================================================================================================

%%RRR=================================================================================================

%%SSS=================================================================================================
\smallskip \noindent
Steimann, F. 2007. The role data model revisited. \textit{Applied Ontology} 2(2): 89--103.

\smallskip \noindent
Steimann, F. 2000. On the representation of roles in object-oriented and conceptual modelling. \textit{Data and Knowledge Engineering} 35: 83--848. 

\smallskip \noindent
Sunagawa, E., Kozaki, K., Kitamura, Y., and Mizoguchi, R. 2006. Role organization model in Hozo. In \textit{Proceedings of the 15th International Conference Knowledge Engineering and Knowledge Management}.

%%TTT=================================================================================================
\smallskip \noindent
Tamai, T., Ubayashi, N., and Ichiyama, R. 2005. An adaptive object model with dynamic role binding. In \textit{Proceedings International Conference on Software Engineering} 166--175.

%%UUU=================================================================================================

%%VVV=================================================================================================

%%WWW=================================================================================================
\smallskip \noindent
Wieringa, R., de Jonge, W., and Spruit, P. 1995. Using Dynamic Classes and Role Classes to Model Object Migration. \textit{Theory and Practice of Object Systems} 1(1):61-83.

%%-------------------------------------------------------------------------------------------------------
%%+++++++++Events+Norms+Goals+Desires++++++++++++++++++++++++++++++++++++++++++++++++++++++++++++++++++

%Bell J., Huang Z. (1997) Dynamic goal hierarchies. In: Cavedon L., Rao A., Wobcke W. (eds) Intelligent Agent Systems Theoretical and Practical Issues. IAS 1996. Lecture Notes in Computer Science (Lecture Notes in Artificial Intelligence), vol 1209. Springer, Berlin, Heidelberg

%%+++++++++FormalizationOfSocialContexts++++++++++++++++++++++++++++++++++++++++++++++++++++++++++++++++++

\end{document}